\DeclareMathOperator*{\argmin}{argmin}
\DeclareMathOperator*{\argmax}{argmax}
\DeclareMathOperator*{\arginf}{arginf}
\def\1{\mathds{1}}
\def\E{\mathbb{E}}
\def\P{\mathbb{P}}
\def\R{\mathbb{R}}
\def\X{\mathcal{X}}
\newcommand{\htheta}{\widehat{\theta}}
\newcommand{\otheta}{\overline{\theta}}
\newcommand{\hatone}{\widehat{(1)}}
\newcommand{\hdeltat}{\widehat{\Delta}^{(t)}}
\newcommand{\mc}[1]{\mathcal{#1}}
\newcommand{\Sp}[1]{\left(#1\right)}
\newcommand{\Mp}[1]{\left[#1\right]}
\newcommand{\Bp}[1]{\left\{#1\right\}}
\newcommand{\abs}[1]{\left|#1\right|}
\newcommand{\Norm}[1]{\left\|#1\right\|}
\newcommand{\ve}[1]{\mathbf{#1}}
\newcommand{\floor}[1]{\left\lfloor#1\right\rfloor}
\newcommand{\ceil}[1]{\left\lceil#1\right\rceil}
\newcommand{\inner}[1]{\left\langle#1\right\rangle}
\newcommand{\matenv}[1]{\left[\begin{matrix}#1\end{matrix}\right]}
\newtheorem{lemma}{Lemma}
\newtheorem{corollary}{Corollary}
\newtheorem{theorem}{Theorem}
\theoremstyle{definition}
\theoremstyle{remark}
\newtheorem{remark}{Remark}
\newcommand{\IfTwoColumnElse}[2]{%
    \if@twocolumn
        #1 
    \else
        #2 
    \fi
}
\title{A/B Testing and Best-arm Identification for Linear Bandits with Robustness to Non-stationarity}
\author{Zhihan Xiong\footnote{Equal contribution.}\\ \url{zhihanx@cs.washington.edu} \and Romain Camilleri\footnotemark[\value{footnote}] \\ \url{camilr@cs.washington.edu} \and Maryam Fazel \\ \url{mfazel@uw.edu} \and Lalit Jain \\\url{lalitj@uw.edu} \and Kevin Jamieson \\ \url{jamieson@cs.washington.edu}}
\date{}
\begin{document}
\maketitle

\renewcommand*{\thefootnote}{\arabic{footnote}}

\begin{abstract}

We investigate the fixed-budget best-arm identification (BAI) problem for linear bandits in a potentially non-stationary environment. Given a finite arm set $\mathcal{X}\subset\mathbb{R}^d$, a fixed budget $T$, and an unpredictable sequence of parameters $\left\lbrace\theta_t\right\rbrace_{t=1}^{T}$, an algorithm will aim to correctly identify the best arm $x^* := \arg\max_{x\in\mathcal{X}}x^\top\sum_{t=1}^{T}\theta_t$ with probability as high as possible. Prior work has addressed the stationary setting where $\theta_t = \theta_1$ for all $t$ and demonstrated that the error probability decreases as $\exp(-T /\rho^*)$ for a problem-dependent constant $\rho^*$. But in many real-world $A/B/n$ multivariate testing scenarios that motivate our work, the environment is non-stationary and an algorithm expecting a stationary setting can easily fail. For robust identification, it is well-known that if arms are chosen randomly and non-adaptively from a G-optimal design over $\mathcal{X}$ at each time then the error probability decreases as $\exp(-T\Delta^2_{(1)}/d)$, where $\Delta_{(1)} = \min_{x \neq x^*} (x^* - x)^\top \frac{1}{T}\sum_{t=1}^T \theta_t$. As there exist environments where $\Delta_{(1)}^2/ d \ll 1/ \rho^*$, we are motivated to propose a novel algorithm \textsf{P1-RAGE} that aims to obtain the best of both worlds: robustness to non-stationarity and fast rates of identification in benign settings. We characterize the error probability of \textsf{P1-RAGE} and demonstrate empirically that the algorithm indeed never performs worse than G-optimal design but compares favorably to the best algorithms in the stationary setting.

\paragraph{Keywords:} fixed-budget best-arm identification, non-stationary linear bandits, A/B testing, robust algorithms.
\end{abstract}

\setcounter{footnote}{0} 
\section{INTRODUCTION}
\label{sec:intro}


Data-driven decision-making and A/B testing enable businesses to evaluate strategies using real-time customer data, offering insights into customer tendencies. 
As the use of these methods has increased, these technologies are being utilized to determine problems with smaller effect sizes, while also targeting smaller audiences. These two competing trends of smaller effect sizes and smaller sample sizes make it increasingly challenging to obtain  statistical significance and correct inference since the absolute number of observations is limited. 
Consequently, there is a rising trend in using \emph{adaptive} sampling like multi-armed bandits to obtain the same statistical insights using fewer total observations.

However, using adaptive experimentation schemes can come with many pitfalls. Most algorithms that are effective in practice  (e.g., Thompson Sampling) are developed with the assumption that the \emph{environment is stationary} and that rewards from treatments are stochastic. However in practice this is far from the case. Non-stationarity can be introduced from a variety of sources such as user populations that change from hour to hour, customer preferences which vary over the course of a year, changes in one part of a platform that lead to latency and higher bounceback, site-wide promotions and sales, interference from competitors, macroeconomic shifts, and many other disruptions. 
Many of these issues are often totally unobservable, 
and therefore cannot be controlled, modeled, or accounted for by an experimenter. 
Under such an environment, it is also possible for the underlying performance of treatments to wildly change, and as a result, the treatment that is best performing on any given day may change. 
This makes the concept of ``the best-performing arm'' poorly defined.

Instead, in time-varying settings, the goal of an experimenter is to identify the ``counterfactual best treatment'' at the end of the experimentation period.
That is, the treatment that would have received the \emph{highest total reward had received all the samples}. 
However, in the absence of being able to predict or model time-variation, predicting precisely how a treatment would behave at every time point, at which time at most one treatment can be evaluated, is impossible.
Fortunately, randomization is a powerful tool to provide the next best thing: unbiased \emph{estimates} of how a treatment would behave as if it had been used at every time in the past. 
These methods are well-understood in the causal-inference and online learning literature and are commonly known as inverse-propensity score (IPS) estimators.
The idea is simple: consider a sequence of evaluations from $n$ treatments at each time $\{ x_t \}_{t=1}^T \subset \R^n$. Note that a procedure can only observe at most one treatment per time denoted as $I_t \in [n]$, which is drawn from a distribution $p_t$ over the $n$ treatments. Then $\widehat{X}_i = \frac{1}{T} \sum_{t=1}^T \frac{ \1\{ I_t = i\} }{p_{t,i}} x_{t,i}$ is an unbiased estimator of the cumulative gain $\frac{1}{T}\sum_{t=1}^T x_{t,i}$ by

\IfTwoColumnElse{
    \begin{equation}
        \label{equ:usual_ips}
        \begin{split}
            \E\left[ \frac{ \1\{ I_t = i\} }{p_{t,i}} x_{t,i} \right] =& \sum_{j=1}^n \P(I_t = j) \frac{ \1\{ j = i\} }{p_{t,i}} x_{t,i} \\
            =& \sum_{j=1}^n p_{t,j} \frac{  \1\{ j = i\}  }{p_{t,i}} x_{t,i} = x_{t,i},
        \end{split}
    \end{equation}
}{
    \begin{equation}
        \label{equ:usual_ips}
        \E\left[ \frac{ \1\{ I_t = i\} }{p_{t,i}} x_{t,i} \right] = \sum_{j=1}^n \P(I_t = j) \frac{ \1\{ j = i\} }{p_{t,i}} x_{t,i}= \sum_{j=1}^n p_{t,j} \frac{  \1\{ j = i\}  }{p_{t,i}} x_{t,i} = x_{t,i},
    \end{equation}
}

as long as $\min_{t,i} p_{t,i} > 0$.
Of course, there is no free lunch, and the variance of $\widehat{X}_i$ behaves like $\frac{1}{T^2} \sum_{t=1}^T 1/p_{t,i}$. 
Intuitively, to maximize efficiency of the samples we do take for inference, we should try to minimize the probabilities on poor performing treatments and prioritize mass for the high performing treatments. 
However, if the treatment performances vary over time, it can be challenging to determine how one might do this optimally.
Fortunately, \citet{abbasi2018best} proposes a novel solution to defining these probabilities in a dynamic way that achieves a ``Best of Both Worlds'' (BOBW) guarantee, which is an algorithm called \textsf{P1} that manages to achieve near-optimal rates regardless of whether the environment is stochastic or arbitrarily non-stationary (adversarial).
This seminal work is the gold standard for A/B testing in unpredictable non-stationary settings.


If the number of treatments is small (<10 in practice), BOBW provides a robust solution for practitioners. 
However, there are many situations that practitioners are interested in for which the number of treatments is very large and intractable for traditional A/B testing. For example, multivariate testing \citet{hill2017efficient} aims to identify not just a single best item, but a set or bundle of items, such as the best 6 pieces of content to highlight on a home screen. 
Given $n$ possibilities, this results in $\binom{n}{6}$ total distinct treatments for the A / B test!
Given this combinatorial explosion, practitioners have made structural parametric assumptions, such as the expected value of a set of items behaves like
\begin{align*}
    \theta^{(0)} + \sum_{i=1}^n \theta_i^{(1)} \alpha_i  + \sum_{i=2}^n \sum_{j < i} \theta_{i,j}^{(2)} \alpha_i \alpha_j,
\end{align*}
where $\alpha \in \{ 0, 1\}^n$ with $\sum_i \alpha_i =6$ indicates whether an item was included in the set or not. 
Note that these sums can be succinctly written as $\langle x, \theta \rangle$ for $\theta = (\theta^{(0)},\theta^{(1)},\theta^{(2)})^\top \in \R^{1+n+\binom{n}{2}}$ and an appropriate $x \in \{0,1\}^{1+n+\binom{n}{2}}$. 
This can reduce the overall number of unknowns, and dimension, to just $O( n^2)$ versus $O(n^6)$.
But now the vectors $x \in \mc{X}$, each associated with a particular bundle, are overlapping and can share information.
A similar situation arises if we have features or covariates that describe each possible treatment. 
For example, a particular song comes with lots of metadata including artist, genre, beats per minute, etc. which can encode the useful properties about the song. 

In these kinds of scenario---whether it be multivariate testing or items with feature descriptions---we would like to perform adaptive experimentation in the presence of time-variation. 
Recall that without covariates, we have solutions like \textsf{P1} that are near-optimal for time-variation.
And without time-variation, there are many methods that take covariates into account and are known to be near-optimal.
This work aims to develop an algorithmic framework for handling covariates with time variation.

The remainder of the paper is organized as follows. We discuss the related work in Section \ref{sec:related_work} and presents detailed problem formulations in Section \ref{sec:preliminary}. In Section \ref{sec:nonstationary}, we propose a simple algorithm for general non-stationary environments and then in Section \ref{sec:bobw}, we propose a robust algorithm that can simultaneously tackle stationary and non-stationary environments. Experiment results are presented in Section \ref{sec:experiments} and our conclusions in Section \ref{sec:conclusion}.
  




\section{RELATED WORK}
\label{sec:related_work}

The problem of identifying the best arm in linear bandits is a well-established and extensively researched problem. 
\citep{soare2014best, karnin2016verification, xu2018fully, fiez2019sequential, katz2020empirical, degenne2020gamification, jedra2020optimal, wagenmaker2023instance}. Notably, \citet{katz2020empirical, azizi2021fixed, yang2021towards} focus on the fixed-budget setting and are closely related to our paper. One notable limitation of these algorithms is their reliance on (unrealistic) stationary settings, which leads to their critical failure when applied in non-stationary scenarios. This motivated increasing interest in studying models for non-stationarity in bandits problems and algorithms agnostic to non-stationary settings, which we review next.

\textbf{Models for non-stationarity in bandits. } 
A reasonable approach in bandit problems with distribution shifts is to provide tight models for unknown variations in the reward distribution. Most literature in this setting focuses on minimizing the dynamic regret, which compares the reward obtained against the reward of the best arm in each round $t$. \cite{garivier2011upper} demonstrates that existing methods such as \citet{auer2002nonstochastic} could achieve a dynamic regret of $\widetilde{O}(\sqrt{LT})$ when $L$, the number of distribution shifts, is known. Then, \citet{auer2019adaptively} makes a significant advancement by introducing an adaptive approach with the same dynamic regret but without the knowledge of $L$. More recently, \cite{chen2019new, wei2021non} establish analogous results in the contextual bandits settings. 
Measures of non-stationarity other than $L$ are also considered. In particular, \citet{chen2019new} measures the non-stationarity by total variation and \citet{suk2022tracking} proposes the novel notion of 
severe shifts. 
Note importantly that while this extensive body of work focuses on building tight models of non-stationarity and developing regret minimization algorithms tuned to them, our work is agnostic to such models. 
\textbf{Agnostic non-stationary bandits (Best of both worlds).}
\citet{bubeck2012best,seldin2014one,seldin2017improved,auer2016algorithm,abbasi2018best, lee2021achieving} focus on the ``best of both worlds'' (BOBW) problem: design a bandit algorithm that agnostically achieves optimal performance in both stationary and non-stationary scenarios, even without prior knowledge of the environment.
While most BOBW work focus on regret minimization goals, \citet{abbasi2018best} focuses on BOBW for best-arm identification. In this work, as in \citet{abbasi2018best}, we focus on the agnostic setting.

\textbf{A/B testing.} As discussed in the introduction, our work is closely related to non-stationary A/B testing. In settings with non-stationarity and adaptive sample allocations, non-stationarity can lead to Simpson's paradox if the sample means are used to estimate arm means \citet{kohavi2011unexpected}. It is common in large-scale industrial platforms to assume that means vary smoothly \cite{wu2022non}, or that the differences between them are constant; i.e., all arms are subject to the same random exogeneous shock \cite{stats-accelerator}. The recent work \citet{qin2022adaptivity} models time-variation as arising from confounding due to a context distribution and aims to find the arm with the best reward on average under this context distribution. Their goal is similar to ours, but, unlike them, we do not assume a context distribution.




\section{PRELIMINARIES}
\label{sec:preliminary}
\textbf{Notation.} Let $[a:b]=\Bp{a, a+1, \dots, b}$ for $a, b\in\mathbb{N}$ with $b>a$ and $[a]=\Bp{1, \dots, a}$. For  a vector $x\in\R^d$ and symmetric positive semi-definite (PSD) matrix $A\in\mathbb{S}_+^d$, we use $\Norm{x}_A=\sqrt{x^\top Ax}$ to denote the Mahalanobis norm. For a finite set $\X\subset\R^d$ and distribution $\lambda\in\triangle_{\X}$ over $\X$, we use $A(\lambda)=\E_{x\sim\lambda}\Mp{xx^\top}$ to denote the covariance matrix under $\lambda$.

\subsection{Linear Bandits Problem Formulation}
\label{sec:linear_formulation}

\textbf{General stationary/non-stationary environments.} In this paper, we assume a standard stationary/non-stationary linear bandits model with fixed horizon $T$. 
In particular, let $\X\subset\R^d$ be a finite arm set with $\abs{\X}=K$ such that $\mathrm{span}(\X)=\R^d$. At each time $t=1, \dots, T$, the learner will pick some arm $x_t\in\X$ and receive some noisy reward $r_t=x_t^\top\theta_t+\epsilon_t$, where $\epsilon_t\in[-1, 1]$ is some independent zero-mean noise. All parameters $\Bp{\theta_t}_{t=1}^{T}$ are chosen and fixed by the environment before the game starts.\footnote{Theoretically, this non-stationary setting has no essential difference with the adversarial setting. We choose this non-stationary setting mainly to keep our presentation concise.} The ultimate goal of the learner is to find the optimal arm $\argmax_{x\in\X}x^\top\otheta_T$, where $\otheta_T=\frac{1}{T}\sum_{t=1}^{T}\theta_t$ is the average parameter. This protocol is summarized in Figure \ref{fig:bandits_protocal}.
\begin{figure}[ht]
    \centering
    \fbox{
        \centering
        \begin{minipage}{0.95\linewidth}
            \vspace*{.025in}
            \textbf{Input:} time horizon, $T$; arm set, $\X\subset\R^d$\\
            \textbf{For} $t=1, \dots, T$\\
            \hspace*{.25in} The learner plays arm $x_t\in\X$\\
            \IfTwoColumnElse{
                \hspace*{.25in} The learner receives reward $r_t=x_t^\top\theta_t+\epsilon_t$,\\
                \hspace*{.3in} where $\epsilon_t$ is independent zero-mean noise\\
            }{
                \hspace*{.25in} The learner receives reward $r_t=x_t^\top\theta_t+\epsilon_t$, where $\epsilon_t$ is independent zero-mean noise\\
            }
            The learner recommends arm $x_{J_T}$ 
        \end{minipage}
        }
    \caption{General protocol of fixed-budget best-arm identification (BAI) for linear bandits.}
    \label{fig:bandits_protocal}
\end{figure}

For simplicity, we further assume that $\forall t\in[T]$, $\forall x\in\X$, $x^\top\theta_t\in[-1, 1]$ and the optimal arm $\argmax_{x\in\X}x^\top\otheta_T$ is unique. Meanwhile, similar to \citet{abbasi2018best}, we use the subscript $(k)$ to denote the index of $k$-th best arm in $\X$, which means to have $x_{(1)}^\top\otheta_T> x_{(2)}^\top\otheta_T\geq\dots\geq x_{(K)}^\top\otheta_T$. For each arm $k\in[K]$, we define its gap $\Delta_k$ as
$$\Delta_k=\begin{cases}
    (x_{(1)}-x_k)^\top\otheta_T &\text{if }k\neq (1),\\
    (x_{(1)}-x_{(2)})^\top\otheta_T & \text{if }k=(1).
\end{cases}$$
That is,we have $\Delta_{(1)}=\Delta_{(2)}\leq\Delta_{(3)}\leq\dots\leq\Delta_{(K)}$. As a slight abuse of notation, for unindexed arm $x\in\X$, we will use $\Delta_x$ to denote the gap of $x$. The performance of the learner is measured by its error probability $\P_{\otheta_T}\Sp{J_T\neq (1)}$, where $J_T$ is the index of the learner's recommendation and the probability measure is taken over the randomness inside the learner and the reward noise. Finally, we note that when the setting is stationary, we simply have $\theta_1=\dots=\theta_T=\theta^*$ and everything else is then defined accordingly.

\begin{remark}[Comparison to the adversarial setting]
    The traditional oblivious adversarial setting can be viewed as a special case of our non-stationary setting, in which we simply pick $\epsilon_t=0$ for all $t$ \citep{abbasi2018best}.
\end{remark}




\subsection{BAI for Linear Bandits in Stationary Environments}
\label{sec:stationary}

In this section, we briefly review the well-studied best-arm identification problem for linear bandits in stationary settings. This problem's complexity, first proposed in \citet{soare2014best}, is defined as
\begin{equation}
    \label{equ:rho_star}
    \rho^*(\theta)=H_{\mathsf{LB}}(\theta)=\inf_{\lambda\in\triangle_{\X}}\max_{x\neq x_{(1)}}\frac{\Norm{x-x_{(1)}}^2_{A(\lambda)^{-1}}}{\Delta_x^2},
\end{equation}
where the optimal arm index $(1)$ and gaps $\Delta_k$ are defined based on the input parameter $\theta$. As discussed in \citet{soare2014best}, this complexity is approximately equal to the number of samples required (up to logarithmic terms) to find the best arm by running an oracle algorithm. Later in \citet{fiez2019sequential}, this complexity is proved to be the optimal sample complexity that a BAI algorithm can possibly achieve in a fixed-confidence setting. Recently, \citet{katz2020empirical} proposes algorithm \textsf{Peace} in fixed-budget setting that achieves error probability $\P_{\theta}\Sp{J_T\neq (1)}\leq\widetilde{O}\Sp{\exp\Sp{-\frac{T}{\rho^*(\theta)\log(d)}}}$.\footnote{Rigorously speaking, the error probability of \textsf{Peace} contains another complexity term called $\gamma^*(\theta)$, which is defined as the minimum of a Gaussian width term. However, as argued in \citet{katz2020empirical}, $\gamma^*(\theta)$ is roughly in a same order of $\rho^*(\theta)$. } 


\section{BAI FOR LINEAR BANDITS IN GENERAL NON-STATIONARY ENVIRONMENTS}
\label{sec:nonstationary}

In this section, we present a simple algorithm \textsf{G-BAI} for the general non-stationary environment and analyze its theoretical guarantee. The algorithm is based on the G-optimal design, which refers to the distribution $\lambda^*\in\triangle_{\X}$ such that
\begin{equation}
    \label{equ:g_design}
    \lambda^*=\arginf_{\lambda\in\triangle_{\X}}\max_{x\in\X}\Norm{x}^2_{A(\lambda)^{-1}}.
\end{equation}
Intuitively, G-optimal design allows us to estimate unknown parameter $\theta_t$ uniformly well over all directions of the arms in $\X$ \citep{soare2014best}. which is suitable for addressing non-stationarity since $\theta_t$ may change arbitrarily and each $x\in\X$ may become the optimal at time $t$. Meanwhile, to make sure the estimation of $\theta_t$ is unbiased in a non-stationary environment, we use an IPS estimator. 

Therefore, briefly speaking, at each time $t$, \textsf{G-BAI} simply samples $x_t$ based on G-optimal design and estimate $\theta_t$ through an IPS estimator, whose details are summarized in Algorithm \ref{algo:gbai}.\footnote{We can see $\widehat{\theta}_T$ exactly becomes the more commonly-seen IPS estimator examined in Eq. \eqref{equ:usual_ips} if we apply it to the multi-armed bandits setting, in which we have $K=d$ arms and $\mc{X}=\Bp{\ve{1}_1, \dots, \ve{e}_d}$.}

\begin{algorithm}[ht]
    \caption{G-optimal Best-arm Identification (G-BAI)}
    \label{algo:gbai}
    \begin{algorithmic}[1]
        \REQUIRE budget, $T\in\mathbb{N}$; arm set $\mc{X}\subset\R^d$
        \STATE Compute G-optimal design $\lambda^*$ based on Eq. \eqref{equ:g_design}
        \FOR{$t=1, 2, \dots, T$}
            \STATE Sample $x_t\sim\lambda^*$ and receive reward $r_t$
        \ENDFOR
        \STATE Estimate $\widehat{\theta}_T\leftarrow \frac{1}{T}\sum_{t=1}^T\E_{x\sim\lambda^*}\Mp{xx^\top}^{-1}x_t r_t$
        \RETURN $\argmax_{x\in\X}x^\top\widehat{\theta}_T$
    \end{algorithmic}
\end{algorithm}

By the famous Kiefer-Wolfowitz theorem, an important property of the G-optimal design is that $\max_{x\in\X}\Norm{x}^2_{A(\lambda^*)^{-1}}=d$ \citep{lattimore2020bandit}. With this property, the variance of estimator $\htheta_t$ can be easily controlled. We can then bound the error probability of \textsf{G-BAI} by this fact and the result is summarized in the following theorem.

\begin{restatable}[Error probability of \textsf{G-BAI}]{theorem}{advupperbound}
    \label{theo:adv_upper_bound}
    Fix time horizon $T$, arm set $\X\subset\R^d$ with $\abs{\X}=K$ and arbitrary unknown parameters $\Bp{\theta_t}_{t=1}^T$. If we run Algorithm \ref{algo:gbai} in this non-stationary environment and obtain $x_{J_T}$, then it holds that
    \IfTwoColumnElse{
        \begin{align*}
            &\P_{\otheta_T}\Sp{J_T\neq (1)}\leq K\exp\Sp{-\frac{T}{12H_{\textsf{G-BAI}}\Sp{\otheta_T}}},\\
            &\text{where }H_{\textsf{G-BAI}}\left(\otheta_T\right)=\frac{d}{\Delta_{(1)}^2}.
        \end{align*}
    }{
        $$\P_{\otheta_T}\Sp{J_T\neq (1)}\leq K\exp\Sp{-\frac{T}{12H_{\textsf{G-BAI}}\Sp{\otheta_T}}},\quad\text{where }H_{\textsf{G-BAI}}\left(\otheta_T\right)=\frac{d}{\Delta_{(1)}^2}.$$
    }
\end{restatable}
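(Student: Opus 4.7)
The plan is to reduce to a per-arm concentration bound via the union bound, exploit unbiasedness of the IPS estimator $\widehat{\theta}_T$ for the average $\overline{\theta}_T$, and then apply a Bernstein-type inequality whose two inputs (range and variance) are controlled by the Kiefer--Wolfowitz property of the G-optimal design. Concretely, since a wrong recommendation means some suboptimal $x$ satisfies $x^\top \widehat{\theta}_T \ge x_{(1)}^\top \widehat{\theta}_T$, I would first write
$$\{J_T \neq (1)\} \;\subseteq\; \bigcup_{x \in \X,\, x \neq x_{(1)}} \bigl\{(x_{(1)} - x)^\top \widehat{\theta}_T \leq 0\bigr\},$$
so that by a union bound over $|\X| = K$ terms it suffices to control $\P\bigl((x_{(1)} - x)^\top \widehat{\theta}_T \leq 0\bigr)$ for a single fixed suboptimal $x$.

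Next, I would verify unbiasedness. Since $x_t \sim \lambda^*$ independently and $\E[\epsilon_t] = 0$, we have $\E[A(\lambda^*)^{-1} x_t r_t] = A(\lambda^*)^{-1} A(\lambda^*) \theta_t = \theta_t$, so $\E[\widehat{\theta}_T] = \overline{\theta}_T$. Setting $y = x_{(1)} - x$, this gives $\E[y^\top \widehat{\theta}_T] = y^\top \overline{\theta}_T = \Delta_x \geq \Delta_{(1)}$, so the event $\{y^\top \widehat{\theta}_T \leq 0\}$ is a genuine large-deviation event for the independent sum $\sum_t Z_t$, where $Z_t := y^\top A(\lambda^*)^{-1} x_t r_t$ has mean $y^\top \theta_t$.

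I would then apply Bernstein's inequality to the centered sum $\sum_{t=1}^T (Z_t - \E[Z_t])$. The two ingredients come from the Kiefer--Wolfowitz theorem $\max_{z \in \X} \|z\|_{A(\lambda^*)^{-1}}^2 \leq d$ and the boundedness assumptions $x^\top \theta_t, \epsilon_t \in [-1,1]$. For the range, Cauchy--Schwarz and the triangle inequality give $\|y\|_{A(\lambda^*)^{-1}} \leq 2\sqrt{d}$, hence $|y^\top A(\lambda^*)^{-1} x_t| \leq 2d$; combined with $|r_t| \leq 2$, this bounds $|Z_t| \leq 4d$. For the variance, using independence of $\epsilon_t$ and $x_t$,
$$\E[Z_t^2] \;\leq\; \E[r_t^2]\cdot y^\top A(\lambda^*)^{-1} \E_{x \sim \lambda^*}[xx^\top] A(\lambda^*)^{-1} y \;=\; \E[r_t^2]\cdot \|y\|_{A(\lambda^*)^{-1}}^2 \;\leq\; O(d).$$
Plugging these into Bernstein, and using $|\Delta_x| \leq 2$ to absorb the linear tail term of Bernstein into the variance term, delivers a per-arm bound of the form $\exp\bigl(-T\Delta_{(1)}^2/(cd)\bigr)$. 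A final union bound over the $K - 1$ suboptimal arms produces the claim.

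The main obstacle will be squeezing the constant down to exactly $c = 12$: the crudest bounds ($|r_t| \leq 2$ and $\|y\|^2_{A(\lambda^*)^{-1}} \leq 4d$) plug into Bernstein to produce a noticeably larger constant, so to match the theorem I expect to need the tighter conditional-variance identity $\E[r_t^2 \mid x_t] = (x_t^\top \theta_t)^2 + \E[\epsilon_t^2] \leq 2$ (exploiting the zero-mean noise) and a careful form of Bernstein applied directly to $\sum_t (Z_t - \E[Z_t])$ rather than to a loose surrogate. Once the concentration is set up, the rest of the argument is routine algebra.
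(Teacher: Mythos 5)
Your proposal is correct and follows the same overall route as the paper: union bound over suboptimal arms, unbiasedness of the IPS estimator, and Bernstein's inequality with the range and variance controlled by the Kiefer--Wolfowitz bound $\max_{z\in\X}\Norm{z}^2_{A(\lambda^*)^{-1}}\leq d$. The one substantive difference is the decomposition, and it is exactly what resolves the obstacle you flag at the end. You work with the difference vector $y=x_{(1)}-x$ and the full-gap deviation $\{y^\top\htheta_T\leq 0\}$, which forces you through $\Norm{y}^2_{A(\lambda^*)^{-1}}\leq 4d$ and yields a constant noticeably larger than $12$ even with the conditional-variance refinement. The paper instead splits $\{(x_{(1)}-x_{(k)})^\top\htheta_T\leq 0\}$ into two \emph{single-arm, half-gap} deviation events, $\{x_{(k)}^\top\htheta_T-x_{(k)}^\top\otheta_T\geq\Delta_{(k)}/2\}$ and $\{x_{(1)}^\top\htheta_T-x_{(1)}^\top\otheta_T\leq-\Delta_{(1)}/2\}$, so that each Bernstein application involves only $\Norm{x_{(k)}}^2_{A(\lambda^*)^{-1}}\leq d$ (variance $\leq d$ per step, magnitude $\leq 3d$) at the cost of halving the deviation threshold; the arithmetic $\exp\Sp{-\frac{T^2\Delta^2/8}{Td+Td\Delta/2}}\leq\exp\Sp{-\frac{T\Delta^2}{12d}}$ (using $\Delta\leq 1$) then gives the stated constant. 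Both decompositions prove a bound of the claimed form $K\exp\Sp{-T\Delta_{(1)}^2/(cd)}$; the single-arm version simply buys the sharper $c$.
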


The proof of Theorem \ref{theo:adv_upper_bound} is deferred to Appendix \ref{sec:g_design_proof}. Here, we briefly compare this result with the one in multi-armed bandits, which can be treated as a special case of linear bandits by taking $\X=\Bp{\ve{e}_1, \dots, \ve{e}_K}$ to be the canonical vectors (standard basis) with $K=d$. 

In particular, \citet{abbasi2018best} shows that in multi-armed bandits setting, a simple uniform sampling algorithm reaches complexity $H_{\mathrm{UNIF}}\Sp{\otheta_T}=\frac{K}{\Delta_{(1)}^2}$ and it is optimal in non-stationary environments. Meanwhile, based on Theorem \ref{theo:adv_upper_bound}, we can see the complexity of \textsf{G-BAI} is $H_{\textsf{G-BAI}}\Sp{\otheta_T}=\frac{d}{\Delta_{(1)}^2}$, which is exactly $H_{\mathsf{UNIF}}(\otheta_T)$ if we treat multi-armed bandits as a special case of linear bandits since $d=K$. Furthermore, if we directly apply \textsf{G-BAI} to multi-armed bandits, meaning to use $\X=\Bp{\ve{e}_1, \dots, \ve{e}_K}$, then $\lambda^*$ is exactly the uniform distribution over $\X$. That is, in multi-armed bandits, \textsf{G-BAI} exactly recovers the optimal complexity in non-stationary environments, which shows that \textsf{G-BAI} is minimax optimal for linear bandits.


\section{A ROBUST ALGORITHM FOR STATIONARY/NON-STATIONARY ENVIRONMENTS}
\label{sec:bobw}


In this section, we present and analyze a new robust linear bandits BAI algorithm called \textsf{P1-RAGE}, which performs comparable to \textsf{G-BAI} in non-stationary environments but much better than it in stationary environments. We will show that it attains good error probability in both stationary and non-stationary environments simultaneously, without knowing a priori which environment it will encounter. We first discuss some intuitions behind the algorithm design.

\textbf{Stationary environments.} The development of our algorithm \textsf{P1-RAGE} is largely inspired by the high-level idea of the robust algorithm \textsf{P1}, proposed in \citet{abbasi2018best}, and the allocation strategy of \textsf{RAGE}, proposed in \citet{fiez2019sequential}. In particular, as discussed in \citet{abbasi2018best}, in multi-armed bandits, to minimize the error probability in stationary environment, we need to control the estimation variance of the optimal arm well enough. Therefore, at each time step, algorithm \textsf{P1} pulls the current estimated best arm with the highest probability (unnormalized ``probability one''), then subsequently the second best arm with second highest probability (unnormalized ``probability half'') and so on. 
We can notice that it actually matches the allocation strategy of the successive halving algorithm in \citet{karnin2013almost}, which is proved to be near-optimal for BAI in stationary multi-armed bandits. Therefore, we design our probability allocation based on the allocation strategy of \textsf{RAGE}, which is proven to be near-optimal for fixed-confidence BAI in stationary linear bandits \citep{fiez2019sequential}. In particular, with the estimated parameter $\htheta_t$, we first find the estimated best arms $\hat{x}^*_t=\argmax_{x\in\X}x^\top\htheta_t$. Then, we use a subroutine to repeatedly and virtually eliminate arms with estimated gaps larger than certain threshold and compute $\mc{XY}$-allocation of the (virtually) remaining arms.\footnote{The elimination is virtual because no samples are collected during the elimination subroutine.} Then, we average over the allocation probabilities computed during each iteration.


\textbf{Non-stationary environments.} Finally, to address the potential non-stationarity in environments, we uniformly mix the allocation probability computed above with a G-optimal design. With such a mixture, the variance over all arms can be controlled well and thus the algorithm will be robust for both stationary and non-stationary environments. The details of \textsf{P1-RAGE} are summarized in Algorithm \ref{algo:p1_rage} and the subroutine to compute the allocation probability, called \textsf{RAGE-Elimination}, is summarized in Algorithm \ref{algo:rage_elimination}.

\begin{algorithm}[ht]
    \caption{P1-RAGE}
    \label{algo:p1_rage}
    \begin{algorithmic}[1]
        \STATE \textbf{Input:} budget, $T\in\mathbb{N}$; arm set $\mc{X}\subset\R^d$; maximum number of virtual phases, $m$
        \STATE Compute G-optimal design $\lambda^*$ based on Eq. \eqref{equ:g_design} and initialize $\lambda_1=\lambda^*$
        \FOR{$t=1, 2, \dots, T$}
            \STATE Sample $x_t\sim\lambda_t$ and receive reward $r_t$
            \STATE Estimate $\widehat{\theta}_t\leftarrow\frac{1}{t}\sum_{s=1}^t\E_{x\sim\lambda_s}\Mp{xx^\top}^{-1}x_s r_s$
            \STATE Update $\lambda_{t+1}\leftarrow$\textsf{RAGE-Elimination}$(\htheta_t, m)$
            \IfTwoColumnElse{
                 \\ 
            }{}
            \hfill // \COMMENT{Call Algorithm \ref{algo:rage_elimination}}
           
        \ENDFOR
        \RETURN $\argmax_{x\in\X}x^\top\widehat{\theta}_T$
    \end{algorithmic}
\end{algorithm}

\begin{algorithm}[ht]
    \caption{RAGE-Elimination}
    \label{algo:rage_elimination}
    \begin{algorithmic}[1]
        \STATE \textbf{Input:} arm set $\mc{X}\subset\R^d$; current estimate $\htheta_t$; maximum number of virtual phases, $m$
        \STATE Find $\hat{x}^*_t\leftarrow\argmax_{x\in\X}x^\top\htheta_t$
        \STATE Initialize $\X_t^{(0)}\leftarrow\X$ and $i\leftarrow 0$
        \WHILE{$|\X_{t}^{(i)}|> 1$ and $i\leq m$}
            \STATE $\lambda^{(i)}_t\leftarrow\arginf_{\lambda\in\triangle_{\X}}\max_{x, x'\in\X_t^{(i)}}\Norm{x-x'}^2_{A(\lambda)^{-1}}$
            \STATE $\mc{X}_{t}^{(i+1)} \leftarrow \Bp{ x \in \mc{X}_t^{(i)}\left|\  \htheta_t^\top(\hat{x}^*_t-x) \leq 2^{-i}\right. }$
            \STATE $i\leftarrow i+1$
        \ENDWHILE
        \RETURN $(\bar{\lambda}_t+\lambda^*)/2$, where $\bar{\lambda}_t= \frac{1}{i}\sum_{i'=0}^{i-1}\lambda_t^{(i')}$
    \end{algorithmic}
\end{algorithm}

We bound the error probability of \textsf{P1-RAGE} under both stationary and non-stationary settings in the following theorem and its proof is deferred to Appendix \ref{sec:bobw_proof}.
\begin{theorem}[Error Probability of \textsf{P1-RAGE}]
    \label{theo:bobw_upper_bound}
    Fix arm set $\X\subset\R^d$ with $\abs{\X}=K$ and budget $T$. For a stationary environment with unknown parameter $\theta$, if $m\geq i_0=\ceil{\log_2\Sp{1/\Delta_{(1)}}}+1$, then there exists absolute constant $c>0$ such that the error probability of \textsf{P1-RAGE} satisfies
    \IfTwoColumnElse{
        \fontsize{9.5}{9.5}
        $$\P_{\theta}\Sp{J_T\neq (1)}\leq 2i_0 KT\exp\Sp{-\frac{cT}{H_{\textsf{P1-RAGE}}(\theta)}},$$
        \begin{equation}
            \label{equ:H_bobw}
            \begin{split}
                &H_{\textsf{P1-RAGE}}(\theta)= \frac{mi_0}{\Delta_{(1)}}\inf_{\lambda\in\triangle_{\X}}\max_{x\neq x_{(1)}}\frac{\Norm{x-x_{(1)}}^2_{A(\lambda)^{-1}}}{\Delta_x} \\
                &\quad + \frac{m\sqrt{d}}{\Delta_{(1)}}\inf_{\lambda\in\triangle_{\X}}\max_{x\neq x_{(1)}}\Norm{x-x_{(1)}}_{A(\lambda)^{-1}}.
            \end{split}
        \end{equation}
        \normalsize
    }{
        \begin{align}
            \P_{\theta}\Sp{J_T\neq (1)}\leq & 2i_0 KT\exp\Sp{-\frac{cT}{H_{\textsf{P1-RAGE}}(\theta)}},\nonumber\\
            \text{where } H_{\textsf{P1-RAGE}}(\theta)=& \frac{mi_0}{\Delta_{(1)}}\inf_{\lambda\in\triangle_{\X}}\max_{x\neq x_{(1)}}\frac{\Norm{x-x_{(1)}}^2_{A(\lambda)^{-1}}}{\Delta_x} + \frac{m\sqrt{d}}{\Delta_{(1)}}\inf_{\lambda\in\triangle_{\X}}\max_{x\neq x_{(1)}}\Norm{x-x_{(1)}}_{A(\lambda)^{-1}}.\label{equ:H_bobw}
        \end{align}
    }
    
    For a non-stationary environment with unknown parameter $\Bp{\theta_t}_{t=1}^{T}$, there exists absolute constant $c'>0$ such that the error probability of \textsf{P1-RAGE} satisfies
    $$\P_{\otheta_T}\Sp{J_T\neq (1)}\leq K\exp\Sp{-\frac{c'T\Delta_{(1)}^2}{d}}.$$
\end{theorem}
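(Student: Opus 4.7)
I would handle the two bounds separately, leveraging the fact that \textsf{P1-RAGE} always samples from a mixture $(\bar\lambda_t+\lambda^*)/2$ that contains a $1/2$ weight of the $G$-optimal design $\lambda^*$. This mixing gives a ``free'' non-stationary guarantee essentially identical to \textsf{G-BAI}, while the $\bar\lambda_t$ component provides the sharper stationary rate. Throughout, the estimator $\widehat\theta_t$ is an IPS-style estimator whose summands $A(\lambda_s)^{-1}x_s r_s$ form a martingale difference sequence with respect to the natural filtration, so concentration will come from a Freedman/Bernstein-type inequality.

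\textbf{Non-stationary bound.} Write $\mu_t:=(\bar\lambda_{t}+\lambda^*)/2$ for the actual sampling distribution at round $t$. Since $A(\mu_t)\succeq \tfrac12 A(\lambda^*)$, we get $A(\mu_t)^{-1}\preceq 2 A(\lambda^*)^{-1}$, so by Kiefer--Wolfowitz $\|v\|^2_{A(\mu_t)^{-1}}\le 2d$ for every $v\in\X$. For any fixed suboptimal $x\neq x_{(1)}$, the random variables $Z_t:=(x_{(1)}-x)^\top A(\mu_t)^{-1} x_t r_t - (x_{(1)}-x)^\top\theta_t$ form a bounded martingale difference sequence whose conditional second moments are at most a universal constant times $\|x_{(1)}-x\|^2_{A(\mu_t)^{-1}}\le 8d$. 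Freedman's inequality then yields
\[
\P\bigl(|(x_{(1)}-x)^\top(\widehat\theta_T-\overline\theta_T)|>\Delta_{(1)}/2\bigr)\le 2\exp(-c'T\Delta_{(1)}^2/d),
\]
and a union bound over the $K-1$ suboptimal arms gives the claimed $K\exp(-c'T\Delta_{(1)}^2/d)$ non-stationary guarantee, exactly mirroring the \textsf{G-BAI} argument from Appendix~\ref{sec:g_design_proof}.

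\textbf{Stationary bound.} Here I would condition on a ``good'' event on which \textsf{RAGE-Elimination} tracks the true elimination structure. Define phase $i_x:=\lceil\log_2(1/\Delta_x)\rceil$ for each suboptimal $x$; the target is to show that, for sufficiently large $t$, the virtual set $\X_t^{(i)}$ produced by the subroutine contains every arm with gap $\le 2^{-i}$ and excludes those with gap $>2^{-i+1}$, just as in the analysis of \textsf{RAGE} \citep{fiez2019sequential}. On this event, $\bar\lambda_t$ is a valid averaged $\mc{XY}$-allocation of the eliminated sets, so for any suboptimal $x$ the variance of $(x_{(1)}-x)^\top A(\mu_s)^{-1} x_s r_s$ is bounded by (a constant times) $\|x_{(1)}-x\|^2_{A(\bar\lambda_s)^{-1}}$, which telescopes through the phases to give the RAGE-like $1/\Delta_x$-weighted allocation objective appearing as the first term of $H_{\textsf{P1-RAGE}}(\theta)$. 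The second term, involving $\sqrt d\,\|x_{(1)}-x\|_{A(\lambda)^{-1}}$, arises from the residual contribution of the $\lambda^*$ component of the mixture, together with the Bernstein-type range factor. I would combine these via a Freedman inequality applied to $(x_{(1)}-x)^\top(\widehat\theta_T-\theta)$, set the confidence radius to $\Delta_x/2$, and union-bound over $x\neq x_{(1)}$ and over the $\le i_0$ scales of the virtual elimination, producing the $2i_0 K T$ prefactor.

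\textbf{Main obstacles.} The delicate step is justifying the ``good event'' that \textsf{RAGE-Elimination} correctly reproduces the $\mc{XY}$-elimination structure, because the allocation $\lambda_{t+1}$ depends on the random estimate $\widehat\theta_t$ formed from all previous samples; one must bootstrap, showing inductively that once $\widehat\theta_t$ is accurate at scale $2^{-i}$, the subsequent allocations sharpen that accuracy to scale $2^{-(i+1)}$, up to phase $i_0$. Closely related is the need to handle the data-dependent $A(\mu_s)^{-1}$ matrices in the IPS sum: since $\mu_s$ is $\mc F_{s-1}$-measurable, the martingale structure is preserved, but controlling the conditional variance uniformly over arms requires carefully tracking the gap-dependent bounds on $\|x_{(1)}-x\|^2_{A(\bar\lambda_s)^{-1}}$ across phases, which is where the factor $m$ enters via the averaging over the $m$ virtual elimination rounds.
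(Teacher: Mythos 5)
Your proposal follows essentially the same route as the paper's proof: the non-stationary bound is obtained exactly as in the paper by exploiting $A(\lambda_t)^{-1}\preceq 2A(\lambda^*)^{-1}$ from the half-weight on the G-optimal design and applying Freedman's inequality to the IPS martingale, and the stationary bound is built on the same scaffolding of dyadic-scale good events, an induction showing \textsf{RAGE-Elimination}'s virtual sets are sandwiched between the true gap-level sets $\mc{A}_i$ (which yields the variance bound $\Norm{x_{(1)}-x}^2_{A(\bar\lambda_s)^{-1}}\leq m\,f(\mc{A}_{i-2})$ and the factor $m$ from averaging), followed by Freedman and a union bound over arms, scales, and time. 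The only pieces left implicit in your sketch — the explicit choice of phase boundaries $n_i$ (equivalently $a_i=\Delta_{(1)}/\Delta_{\bar k(i)}$) and the telescoping that collapses the per-scale sum into the stated $\frac{mi_0}{\Delta_{(1)}}\inf_\lambda\max_x \Norm{x-x_{(1)}}^2_{A(\lambda)^{-1}}/\Delta_x$ term — are bookkeeping the paper carries out in its simplification theorem, and your plan leads directly to them.
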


We can immediately see that in non-stationary environments, the error probability of \textsf{P1-RAGE} matches (up to a constant) with \textsf{G-BAI}, showing that \textsf{P1-RAGE} is minimax optimal for linear bandits under non-stationarity. On the other hand, because of the $\frac{1}{\Delta_{(1)}}$ factor, we can see that in stationary environments, $H_{\textsf{P1-RAGE}}(\theta)\gtrsim H_{\textsf{LB}}(\theta)$ (defined in Eq. \eqref{equ:rho_star}), which implies that \textsf{P1-RAGE} is suboptimal in stationary settings. However, this should be expected since even for multi-armed bandits, as proved in \citet{abbasi2018best}, it is impossible for an algorithm to achieve $H_{\textsf{LB}}(\theta)$ while being robust to non-stationarity, let alone linear bandits. 

Nevertheless, when applying Theorem \ref{theo:bobw_upper_bound} to multi-armed bandits ($\X=\Bp{\ve{e}_1, \dots, \ve{e}_K}$), as long as we choose $m\approx i_0$, we can show that (Corollary \ref{coro:bobw_linear_to_mab} in Appendix \ref{sec:bobw_proof})
$$H_{\textsf{P1-RAGE}}(\theta)=\widetilde{O}\Sp{\frac{1}{\Delta_{(1)}}\max_{k\in[K]}\frac{k}{\Delta_{(k)}}}=\widetilde{O}\Sp{H_{\mathrm{BOB}}(\theta)},$$
where $H_{\mathrm{BOB}}(\theta)$ is the best-of-both-worlds complexity proposed in \citet{abbasi2018best}. In particular, \citet{abbasi2018best} proves that $H_{\mathrm{BOB}}(\theta)$ is the best complexity that any algorithm can possibly achieve if it is constrained to be robust to non-stationarity. 
That is, again, our algorithm \textsf{P1-RAGE} retains the near-optimal complexity for stationary multi-armed bandits if it is constrained to be robust in non-stationary environments.

\begin{remark}
    Here, we do not elaborate the proof details of Theorem \ref{theo:bobw_upper_bound} mainly because we do not recognize them as widely applicable techniques. However, we do want to emphasize that this proof is by no means a simple extension of the analysis of the algorithm \textsf{P1} in \citet{abbasi2018best}. In particular, our proof uses a different set of virtual events based on the estimated gaps. Meanwhile, the analysis of subroutine \textsf{RAGE-Elimination} is intricately tailored to the unique characteristics of being a virtual elimination strategy, which is not presented in neither \textsf{RAGE} nor \textsf{P1} \citep{abbasi2018best, fiez2019sequential}.
\end{remark}

\textbf{Theoretical limitations of \textsf{P1-RAGE}.} Despite being near-optimal in multi-armed bandits, $H_{\textsf{P1-RAGE}}(\theta)$ includes an extra low-order term $\frac{m\sqrt{d}}{\Delta_{(1)}}\inf_{\lambda\in\triangle_{\X}}\max_{x\neq x_{(1)}}\Norm{x-x_{(1)}}_{A(\lambda)^{-1}}$. This term appears because the Bernstein's inequality requires a bound of the estimator's magnitude, which can be removed if the concentration bound only scales with the estimator's variance. Although this can often be accomplished by using Catoni's robust mean estimator \citep{wei2020taking}, it requires a concrete confidence level to be specified before estimation, which is not feasible in our fixed budget setting. Finding an approach to circumvent this difficulty and remove this extra term, or alternatively, demonstrate that it is necessary, is an open question.

\begin{remark}
    The question of whether the extra term is removable naturally relates the instance-dependent lower bound of this problem. However, proving an instance-dependent lower bound for our setting requires constructing both stationary and non-stationary counterexamples. This task is thereby more challenging compared to proving an instance-dependent lower bound for the fixed-budget best-arm identification problem in linear bandits within a purely stationary setting, an open question that persists (see \citet{yang2022minimax} for a minimax lower bound).  We thus leave establishing such instance-dependent lower bounds for future work.
\end{remark}

\textbf{Parameter choice of \textsf{P1-RAGE}.} Although \textsf{P1-RAGE} requires a user-specified parameter $m\geq \ceil{\log(1/\Delta_{(1)})}+1$ to bound the total number of virtual phases, it is not difficult to choose a reasonable value for this parameter in a practical implementation. On the one hand, since its dependence on $\Delta_{(1)}^{-1}$ is only logarithmic, taking some moderate value such as $m=25$ should safely satisfy $m\geq i_0$ for most practical scenarios; on the other hand, in most real-world applications, a sub-optimal arm should always be acceptable as long as its gap is small enough. Indeed, if we take $\epsilon$ to be the largest acceptable sub-optimality gap and take $m\geq \ceil{\log(1/\epsilon)}+1$, then \textsf{P1-RAGE} will output arm $x_{J_T}$ that satisfies $\Delta_{J_T}\leq \max\Bp{\epsilon, \Delta_{(1)}}$ with high probability in pure stationary environments (Corollary \ref{coro:epsilon_bai} in Appendix \ref{sec:bobw_proof}). That is, the output arm will either be an optimal arm if $\epsilon\leq\Delta_{(1)}$ or an arm with an acceptable suboptimality gap $\epsilon$ otherwise.

\section{EXPERIMENTS}
\label{sec:experiments}

In this section, we present our experiment results on several stationary/non-stationary environments. Since to the best of our knowledge, we are the first to propose best-arm identification algorithms that tackle non-stationarity in linear bandits, the algorithms from other works that we compare with are all specifically designed for stationary environments. In particular, we will compare our algorithms with \textsf{Peace}, which is the first fixed-budget algorithm for linear bandits and also inspires our algorithmic design \citep{katz2020empirical}, and \textsf{OD-LinBAI}, which is the most recent algorithm of this kind and is claimed to be minimax optimal \citep{yang2022minimax}.

Meanwhile, we also examine two additional heuristically designed algorithms for non-stationary environments. The first one is \textsf{P1-Peace}, which has the same design spirit as \textsf{P1-RAGE} but uses a different \textsf{Peace}-based virtual elimination subroutine; the second one is \textsf{Mixed-Peace}, which is a naive mixture of \textsf{Peace} and the G-optimal design. In particular, while \textsf{P1-RAGE/P1-Peace} combines G-optimal design with what \textsf{RAGE/Peace} would sample \emph{in a full run}, \textsf{Mixed-Peace} simply mixes G-optimal design with what \textsf{Peace} in a stationary environment samples \emph{at each time step}. The details of these two additional algorithms are summarized in Algorithm \ref{algo:p1_peace} and \ref{algo:mixed_peace} in Appendix \ref{sec:modified_algo}, respectively. More implementation details and additional experiments can be found in Appendix \ref{sec:experiment_details}.\footnote{Code repository is available at \url{https://github.com/FFTypeZero/bobw_linear}.}

\IfTwoColumnElse{
    \begin{figure}[ht]
        \centering
        \includegraphics[width=\linewidth]{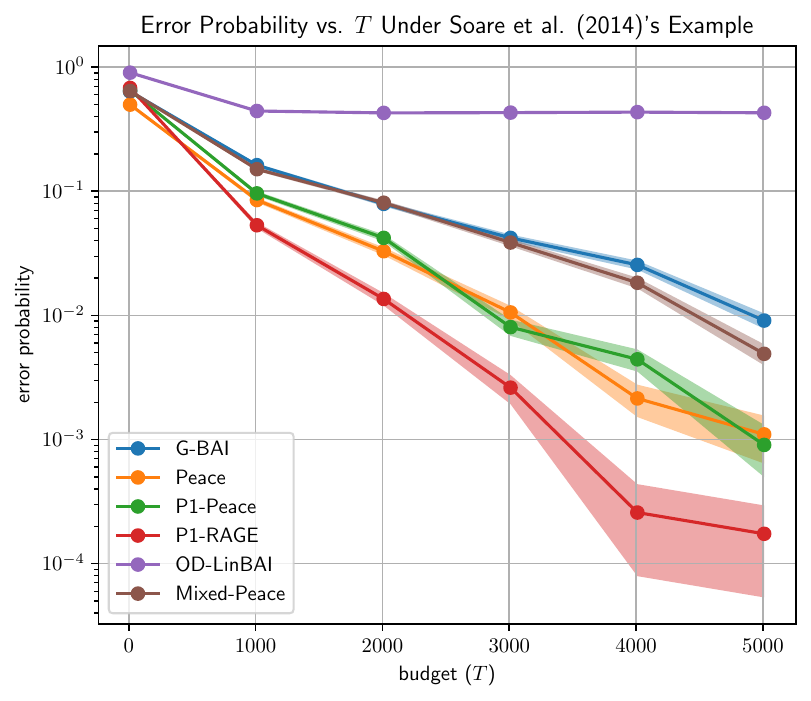}
        \caption{Each error probability is estimated through at least $2\times 10^4$ independent trials. The vertical axis is on log scale and the shaded area represents the $95\%$ confidence interval.}
        \label{fig:sto_soare}
    \end{figure}
}{
    \begin{figure}[ht]
        \centering
         \begin{subfigure}[b]{0.48\textwidth}
             \centering
             \includegraphics[width=\textwidth]{figs/sto_soare_log.pdf}
             \caption{Each error probability is estimated through at least $2\times 10^4$ independent trials.}
             \label{fig:sto_soare}
         \end{subfigure}
         \hfill
         \begin{subfigure}[b]{0.48\textwidth}
             \centering
             \includegraphics[width=\textwidth]{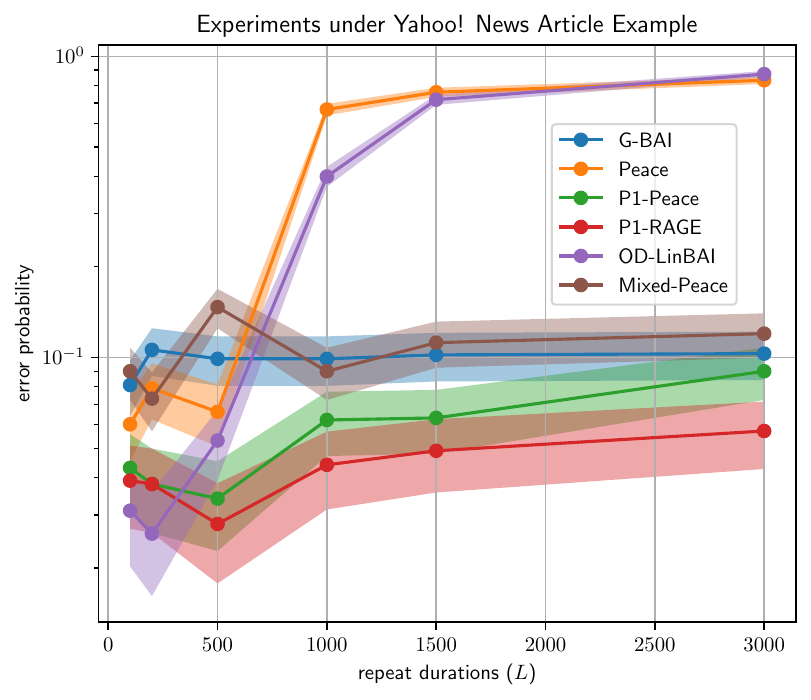}
             \caption{Each error probability is estimated through $1000$ independent trials.}
             \label{fig:adv_yahoo}
         \end{subfigure}
         \caption{The vertical axis is on log scale and the shaded area represents the $95\%$ confidence interval.}
    \end{figure}
}

\textbf{Stationary benchmark example.} First, as a sanity check, we consider the famous stationary benchmark example proposed in \citet{soare2014best}. In particular, we have $\X=\Bp{\ve{e}_1, \dots, \ve{e}_d, x'}$, where $x'=\cos(\omega)\ve{e}_1+\sin(\omega)\ve{e}_2$ with some small $\omega>0$, and $\otheta_T=\theta^*=2\ve{e}_1$ so that $x_{(1)}=\ve{e}_1$. An efficient algorithm should pick $\ve{e}_2$ frequently to reduce the variance in the direction of $\ve{e}_1-x'$. In this example, we pick $d=10$ and $\omega=0.1$. 

The results are shown in Figure \ref{fig:sto_soare}. We can see that both our algorithms, \textsf{P1-RAGE} and \textsf{P1-Peace}, perform better than \textsf{G-BAI} and comparably with \textsf{Peace}, showing that our algorithms maintain good performance in stationary environments. Meanwhile, we also notice that \textsf{Mixed-Peace} has performance only comparable to \textsf{G-BAI}, showing that naively mixing the allocation strategy with the G-optimal design can downgrade the performance in stationary environments.

\begin{figure*}[ht]
    \centering
    \includegraphics[width=\linewidth]{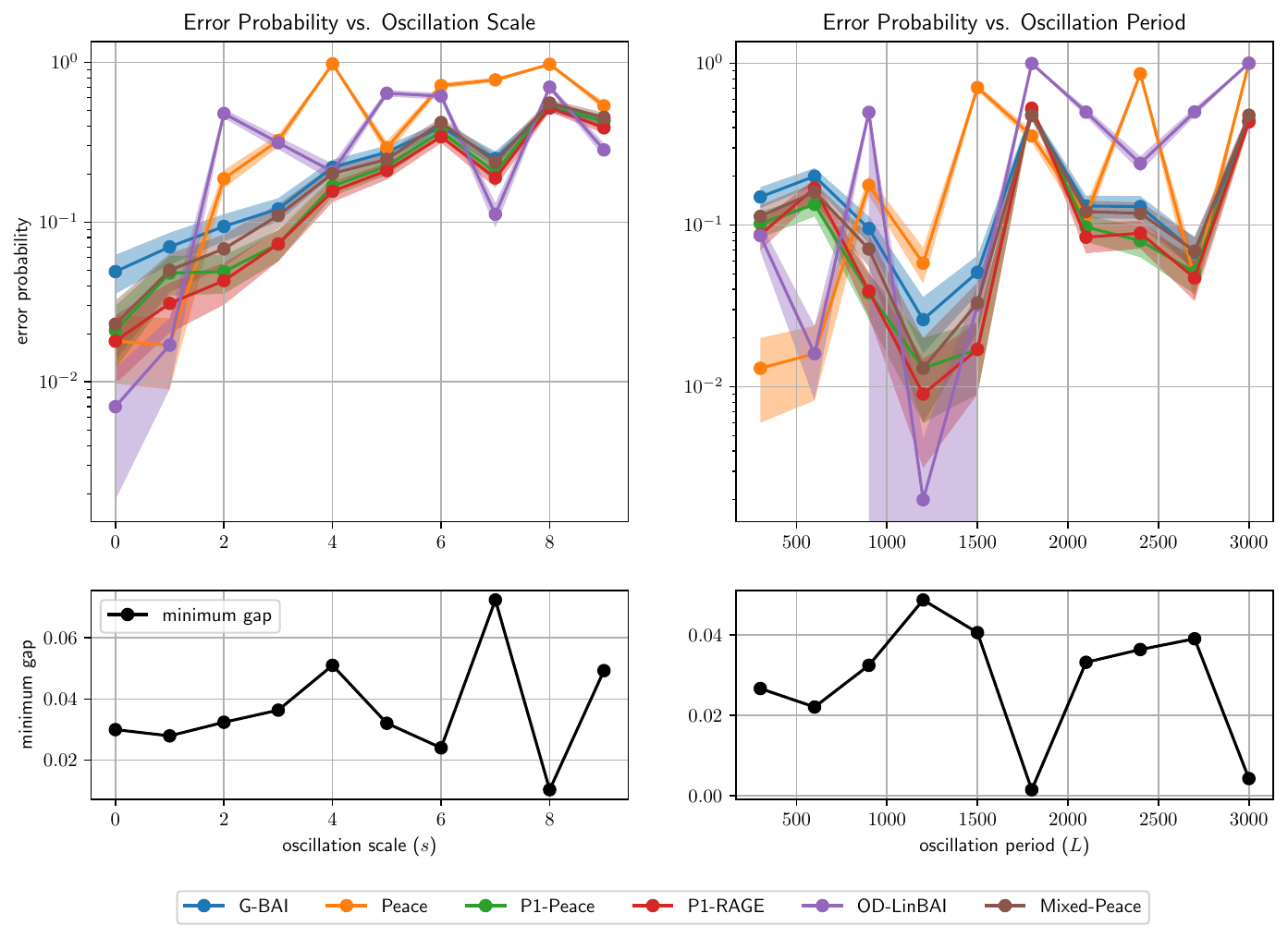}
    \caption{Each error probability is estimated through 1000 repeated trials. The bottom two plots give the minimum gap $\Delta_{(1)}$ of each instance as a function of oscillation scale $s$ and oscillation period $L$.}
    \label{fig:adv_multi}
\end{figure*}

\textbf{Non-stationary multivariate testing example.} We consider a multivariate testing example from \citet{fiez2019sequential}, which is also similar to the one discussed in Introduction. Considering a webpage with $D$ distinct slots and suppose each slot has two content choices, where we represent each layout as an element $w\in\mc{W}=\Bp{-1, 1}^D$. We hope to maximize the click-through rate and we assume it linearly depends on a layout-determined arm $x\in \X$ in a form of
$$x^\top\theta^*=\theta_0^*+\alpha_1\sum_{j=1}^{D}\theta_j^*w_j+\alpha_2\sum_{k=1}^{D-1}\sum_{\ell=k+1}^{D}\theta^*_{k, \ell}w_kw_{\ell}.$$
Here $\theta_0^*$ is the common bias, $\theta_j^*$ is the weight of $j$-th slot and $\theta^*_{k, \ell}$ is the weight of the interaction between $k$-th and $\ell$-th slots. Because of the periodic nature of people's life cycle, it is very likely that the real-world weights will periodically change. Therefore, to construct a non-stationary environment, we randomly oscillate the weights with scale $s$ and period $L$ to get
\IfTwoColumnElse{
    \begin{align*}
        &\theta_{t, i}=\theta^*_i+sI\Norm{\theta^*}_{\infty}\sin\Sp{\frac{2\pi t}{L}+\phi_i},\\
        &\text{where } I\sim\mathsf{Unif}(\Bp{0, 1}), \phi_i\sim\mathsf{Unif}([0, 2\pi]).
    \end{align*}
}{
    $$\theta_{t, i}=\theta^*_i+sI\Norm{\theta^*}_{\infty}\sin\Sp{\frac{2\pi t}{L}+\phi_i}, \quad\text{where } I\sim\mathsf{Unif}(\Bp{0, 1}), \phi_i\sim\mathsf{Unif}([0, 2\pi]).$$
}
Here, in the first series of instances, we fix $L=900$ and take values $s\in\Bp{0, 1, \dots, 9}$, and in the second series of instances, we fix $s=2$ and take values $L\in\Bp{300, 600, \dots, 3000}$. Finally, we take $\alpha_1=1$, $\alpha_2=0.5$, sample each component of $\theta^*$ uniformly in $[-0.1, 0.1]$ and guarantee that $\otheta_T$ has the same optimal arm as $\theta^*$. We take $T=10^4$ for all settings and the results are shown in Figure \ref{fig:adv_multi}.

 From the plots, we can see that the error probabilities of \textsf{Peace} and \textsf{OD-LinBAI}, algorithms designed for stationary environments, can range from near 0 to 1 in different non-stationary environments, which is quite unstable. Meanwhile, we can see that the performance of the other four algorithms, which all in certain way contain a G-optimal design, is relatively much more stable.\footnote{All algorithms fluctuate in the upper right plot mainly because the minimum gaps also have large fluctuation.} Furthermore, among these four algorithms, we can see that our algorithms \textsf{P1-RAGE} and \textsf{P1-Peace} consistently outperform (never worse than) \textsf{G-BAI} and \textsf{Mixed-Peace}.


\textbf{Non-stationary click-through example.} To create an instance using real-world data, we use the Yahoo! Webscope Dataset R6A \citep{r6adataset}.\footnote{\url{https://webscope.sandbox.yahoo.com/}} This dataset contains a fraction of user click log of Yahoo!'s news article from May 1st, 2009 to May 10th, 2009. For each click, we take the outer product between user and article features to get a vector in $\R^{36}$ and then we run a principle component analysis to get arm set $\mc{Z}\subset\R^{24}$. To create a non-stationary example, we take data from May 1st to May 7th and for each day's data, we fit a ridge regression with regularization $0.01$, obtaining $\theta^*_1, \dots, \theta^*_7$, which can be used to simulate user's weekly periodic behavior. Suppose we receive $L$ visits each day, then, we can define a non-stationary environment where each period consists of $\theta_1^*,\dots, \theta_1^*, \dots, \theta_7^*, \dots, \theta_7^*$ and each $\theta_i^*$ repeats for $L$ times. Finally, we form our arm set $\X$ by picking the optimal arm from $\mc{Z}$ plus 23 randomly picked arms with gap at least $0.05$ so that $\mathrm{span}(\X)=\R^{24}$. We take $T=2.1\times 10^4$ and the results are shown in Figure \ref{fig:adv_yahoo}. Again, we can see that the performance of \textsf{Peace} and \textsf{OD-LinBAI} is very unstable and the performance of \textsf{P1-RAGE} and \textsf{P1-Peace} consistently outperforms the other two naive G-optimal-design-based algorithms, \textsf{G-BAI} and \textsf{Mixed-Peace}.

\IfTwoColumnElse{
    \begin{figure}[ht]
        \centering
        \includegraphics[width=\linewidth]{figs/adv_yahoo_log.pdf}
        \caption{Each error probability is estimated through $1000$ independent trials. The vertical axis is on log scale and the shaded area represents the $95\%$ confidence interval.}
        \label{fig:adv_yahoo}
    \end{figure}
}{

}


\section{CONCLUSIONS AND FUTURE WORK}
\label{sec:conclusion}

To the best of our knowledge, in this paper, we present the first two novel robust linear bandits algorithm for fixed-budget best-arm identification, \textsf{P1-RAGE} and \textsf{P1-Peace}, that tackle stationary and non-stationary environments simultaneously while being agnostic to the environment. Theoretically, we prove error probability bounds of \textsf{P1-RAGE} in both stationary and non-stationary environments. Empirically, we show that in stationary settings, both \textsf{P1-RAGE} and \textsf{P1-Peace} perform comparably with algorithms designed for such environments, and in non-stationary settings, they consistently outperform naive algorithms based on G-optimal design.

Finally, several questions still remain open. Is the extra term in $H_{\textsf{P1-RAGE}}(\theta)$, as discussed in Section \ref{sec:bobw}, necessary? What is the optimal complexity for this mixed stationary/non-stationary settings? Answering these questions can serve as promising future directions.

\section*{Acknowledgements}

ZX sincerely thanks Anze Wang for his great favor offered during the time of paper writing. This work was supported in part by the NSF TRIPODS II grant DMS 2023166, NSF CCF 2007036, NSF CCF 2212261 and Microsoft Grant for Customer Experience Innovation.

\bibliography{References}
\bibliographystyle{plainnat}

\newpage
\appendix

\tableofcontents
\newpage

\section{ADDITIONAL ALGORITHMS IN IMPLEMENTATION}

\subsection{A \textsf{Peace}-based Robust Algorithm}
\label{sec:modified_algo}

In this section, we briefly explain how we design \textsf{P1-Peace} based on intuition similar to \textsf{P1-RAGE} and make it computationally efficient. First, we propose another subroutine, called \textsf{Peace-Elimination}, based on the elimination strategy in \textsf{Peace} \citet{katz2020empirical}, which has the same spirit as \textsf{RAGE}. Similar to \textsf{RAGE-Elimination}, \textsf{Peace-Elimination} also repeatedly computes $\mc{XY}$-allocation, but (virtually) eliminate arms so that the value of the remaining arms' optimal $\mc{XY}$-design is halved. In addition, in \textsf{P1-Peace}, we only update the sampling distribution $\lambda_t$ after a period of time. The intuition is that if the environment is stationary, then we do not need to update our allocation probability frequently just like \textsf{RAGE} and \textsf{Peace}; if the environment is non-stationary, then the non-stationarity is handled by the mixed G-optimal design $\lambda^*$, which is fixed from the very beginning. Therefore, updating $\lambda_t$ in a low frequency should not severely harm the performance. The new algorithm and elimination subroutine are summarized in Algorithm \ref{algo:p1_peace} and \ref{algo:peace_elimination}.

For convenience of presentation, for arm set $\mc{Z}\subset\R^d$ and distribution $\lambda\in\triangle_{\X}$, we define
\begin{equation}
    \label{equ:rho_z}
    \rho(\mc{Z}, \lambda)=\max_{x, x'\in\mc{Z}}\Norm{x-x'}^2_{A(\lambda)^{-1}}.
\end{equation}

\begin{algorithm}[ht]
    \caption{P1-Peace}
    \label{algo:p1_peace}
    \begin{algorithmic}[1]
        \STATE \textbf{Input:} budget, $T\in\mathbb{N}$; arm set $\mc{X}\subset\R^d$
        \STATE Compute epoch length $R\leftarrow\floor{\frac{T}{\log_2(\inf_{\lambda\in\triangle_{\X}}\rho(\X, \lambda))}}$
        \STATE Compute G-optimal design $\lambda^*$ based on equation \eqref{equ:g_design} and initialize $\lambda_1=\lambda^*$
        \FOR{$t=1, 2, \dots, T$}
            \STATE Sample $x_t\sim\lambda_t$ and receive reward $r_t$
            \STATE Estimate $\widehat{\theta}_t\leftarrow\frac{1}{t}\sum_{s=1}^t\E_{x\sim\lambda_s}\Mp{xx^\top}^{-1}x_s r_s$
            \STATE $\lambda_{t+1}\leftarrow\lambda_t$
            \IF{$t-1=cR$ for some integer $c$}
                \STATE Update $\lambda_{t+1}\leftarrow$\textsf{Peace-Elimination}$(\htheta_t)$
            \ENDIF
        \ENDFOR
        \RETURN $\argmax_{x\in\X}x^\top\widehat{\theta}_T$
    \end{algorithmic}
\end{algorithm}

\begin{algorithm}
    \caption{Peace-Elimination}
    \label{algo:peace_elimination}
    \begin{algorithmic}[1]
        \STATE \textbf{Input:} arm set $\mc{X}\subset\R^d$; current estimate $\htheta_t$
        \STATE Find index $\widehat{(k)}_t$ such that $x_{\widehat{(1)}_t}^\top\htheta_t\geq x_{\widehat{(2)}_t}^\top\htheta_t\geq\dots\geq x_{\widehat{(K)}_t}^\top\htheta_t$
        \STATE Initialize $\X_t^{(0)}\leftarrow\X$ and $i\leftarrow 0$
        \WHILE{$|\X_{t}^{(i)}|> 1$}
            \STATE Compute $\lambda^{(i)}_t\leftarrow\arginf_{\lambda\in\triangle_{\X}}\rho(\X_t^{(i)}, \lambda)$
            \STATE Find the largest index $k_i$ such that
            $$\inf_{\lambda\in\triangle_{\X}}\rho\Sp{\{x_{\widehat{(1)}_t}, \dots, x_{\widehat{(k_i)}_t}\}}\leq \frac{1}{2}\cdot\inf_{\lambda\in\triangle_{\X}}\rho(\X_t^{(i)}, \lambda)$$
            \STATE Update $\X_{t}^{(i+1)}\leftarrow\Bp{x_{\widehat{(1)}_t}, \dots, x_{\widehat{(k_i)}_t}}$
            \STATE $i\leftarrow i+1$
        \ENDWHILE
        \RETURN $(\bar{\lambda}_t+\lambda^*)/2$, where $\bar{\lambda}_t= \frac{1}{i}\sum_{i'=0}^{i-1}\lambda_t^{(i')}$
    \end{algorithmic}
\end{algorithm}

\subsection{A Naive Baseline Mixed Algorithm}
\label{sec:naive_algo}
In this section, we present a naive mixture of \textsf{Peace} and the G-optimal design, called \textsf{Mixed-Peace}, which eliminates arms and computes design $\lambda_k$ during each epoch exactly the same as \textsf{Peace}. The only differences are that \textsf{Mixed-Peace} uses IPS estimator and when pulling an arm, it will pull an arm by following $x_t\sim(\lambda_k+\lambda^*)/2$, where $\lambda^*$ is the G-optimal design defined in equation \eqref{equ:g_design}. Its details are summarized in Algorithm \ref{algo:mixed_peace}.


\begin{algorithm}
    \caption{Mixed-Peace}
    \label{algo:mixed_peace}
    \begin{algorithmic}[1]
        \STATE \textbf{Input:} budget, $T\in\mathbb{N}$; arm set $\mc{X}\subset\R^d$
        \STATE Initialize $R\leftarrow\ceil{\log_2\Sp{\inf_{\lambda\in\triangle_{\X}}\rho(\X, \lambda)}}$, $N\leftarrow\floor{\frac{T}{R}}$, $\mc{X}_0\leftarrow\X$, $\htheta_0\leftarrow\ve{0}$ and $t\leftarrow 1$
        \STATE Compute G-optimal design $\lambda^*$ using equation \eqref{equ:g_design}
        \FOR{$r=0, \dots, R$}
            \STATE Find $\lambda_r\leftarrow(\arginf_{\lambda\in\triangle_{\X}}\rho(\X_r, \lambda)+\lambda^*)/2$
            \WHILE{$t\leq\min\Bp{T, (r+1)N}$}
                \STATE Sample $x_t\sim \lambda_r$ and receive reward $r_t$
                \STATE Estimate $\widehat{\theta}_t\leftarrow \frac{t-1}{t}\cdot \htheta_{t-1}+\frac{1}{t}\cdot\E_{x\sim\lambda_r}\Mp{xx^\top}^{-1}x_t r_t$
                \STATE $t\leftarrow t+1$
            \ENDWHILE
            \IF{$\abs{\X_r}>1$}
                \STATE Reindex $\X_r$ such that $x_1^\top\htheta_t\geq x_2^\top\htheta_t\geq\dots\geq x_{n_r}^\top\htheta_t$, where $n_r=\abs{\X_r}$
                \STATE Find the largest index $k_r$ such that 
                $$\inf_{\lambda\in\triangle_{\X}}\rho(\Bp{x_1, \dots, x_{k_r}}, \lambda)\leq \frac{1}{2}\cdot\inf_{\lambda\in\triangle_{\X}}\rho(\X_r, \lambda)$$
                \STATE Update $\X_{r+1}\leftarrow\Bp{x_1, \dots, x_{k_r}}$
            \ENDIF
        \ENDFOR
        \textbf{return} $\argmax_{x\in\X}x^\top\htheta_T$
    \end{algorithmic}
\end{algorithm}
\section{ERROR PROBABILITY OF ALGORITHM \ref{algo:gbai} IN NON-STATIONARY ENVIRONMENTS}
\label{sec:g_design_proof}

\advupperbound*

\begin{proof}
    Based on the recommendation rule $x_{J_T}=\argmax_{x\in\X}x^\top\htheta_T$, we have
    \begin{align}
        \P\Sp{J_T\neq (1)}=&\P\Sp{\exists k \in [2:K] \text{ s.t. } x_{(k)}^\top\htheta_T\geq x_{(1)}^\top\htheta_T}\nonumber\\
        \leq & \P\Sp{\exists k\in[2:K]\text{ s.t. } x_{(k)}^\top\htheta_T-x_{(k)}^\top\otheta_T\geq\frac{\Delta_{(k)}}{2} \text{ or } x_{(1)}^\top\htheta_T-x_{(1)}^\top\otheta_T\leq -\frac{\Delta_{(1)}}{2}}\nonumber\\
        \leq & \P\Sp{x_{(1)}^\top\htheta_T-x_{(1)}^\top\otheta_T\leq -\frac{\Delta_{(1)}}{2}} + \sum_{k=2}^{K}\P\Sp{x_{(k)}^\top\htheta_T-x_{(k)}^\top\otheta_T\geq\frac{\Delta_{(k)}}{2}}.\label{equ:g_bernstein}
    \end{align}
    The above terms can be bounded by Bernstein's inequality. In particular, for the first term, we have
    $$\P\Sp{x_{(1)}^\top\htheta_T-x_{(1)}^\top\otheta_T\leq -\frac{\Delta_{(1)}}{2}}=\P\Sp{\sum_{t=1}^{T}x_{(1)}^\top\Sp{A(\lambda^*)^{-1}x_tr_t-\theta_t}\leq-\frac{T\Delta_{(1)}}{2}}.$$
    Since IPS estimator is unbiased, $x_{(1)}^\top\Sp{A(\lambda^*)^{-1}x_tr_t-\theta_t}$ is a zero-mean random variable. Based on our bounded reward assumption, we have
    $$\abs{x_{(1)}^\top\Sp{A(\lambda^*)^{-1}x_tr_t-\theta_t}}\leq \abs{x_{(1)}^\top A(\lambda^*)^{-1}x_t} + 2\leq \Norm{x_{(1)}}_{A(\lambda^*)^{-1}}\Norm{x_t}_{A(\lambda^*)^{-1}}+2\leq d+2\leq 3d,$$
    where we use the property of G-optimal design $\max_{x\in\X}\Norm{x}^2_{A(\lambda^*)^{-1}}\leq d$. We can similarly bound its variance by 
    \begin{align*}
        \E\Mp{\Sp{x_{(1)}^\top\Sp{A(\lambda^*)^{-1}x_tr_t-\theta_t}}^2}\leq & \E\Mp{\Sp{x_{(1)}^\top A(\lambda^*)^{-1}x_t}^2}\\
        = & x_{(1)}^\top A(\lambda^*)^{-1}\E\Mp{x_tx_t^\top}A(\lambda^*)^{-1}x_{(1)}\\
        = & x_{(1)}^\top A(\lambda^*)^{-1}A(\lambda^*)A(\lambda^*)^{-1}x_{(1)}\tag{Since $x_t\sim\lambda^*$ by algorithm}\\
        = & \Norm{x_{(1)}}^2_{A(\lambda^*)^{-1}} \leq  d
    \end{align*}
    Thus, by Bernstein's inequality, we have
    $$\P\Sp{x_{(1)}^\top\htheta_T-x_{(1)}^\top\otheta_T\leq -\frac{\Delta_{(1)}}{2}}\leq \exp\Sp{-\frac{T^2\Delta^2_{(1)}/8}{Td+Td\Delta_{(1)}/2}}\leq\exp\Sp{-\frac{T\Delta_{(1)}^2}{12d}},$$
    where the last inequality uses the assumption that $\Delta_{(1)}\leq 1$. By similarly applying Bernstein's inequality to other terms in \eqref{equ:g_bernstein}, we can then have
    \begin{align*}
        \P\Sp{J_T\neq x_{(1)}} \leq &\P\Sp{x_{(1)}^\top\htheta_T-x_{(1)}^\top\otheta_T\leq -\frac{\Delta_{(1)}}{2}} + \sum_{k=2}^{K}\P\Sp{x_{(k)}^\top\htheta_T-x_{(k)}^\top\otheta_T\geq\frac{\Delta_{(k)}}{2}}\\
        \leq & \sum_{k=1}^{K}\exp\Sp{-\frac{T\Delta_{(k)}^2}{12d}}\\
        \leq & K\exp\Sp{-\frac{T\Delta_{(1)}^2}{12d}}.
    \end{align*}
\end{proof}
\section{ERROR PROBABILITY OF ALGORITHM \ref{algo:p1_rage}}
\label{sec:bobw_proof}
\subsection{Stationary Environments}
We first prove an error probability of Algorithm \ref{algo:p1_rage} in stationary environments that contains unspecified parameters from the virtual phases. Without loss of generality, assume that the arms $x_1, \ldots, x_K$ are ordered such that $\theta^\top x_1 > \theta^\top x_2 \geq \dots \geq \theta^\top x_K$ and $\Delta_1=\Delta_2\leq\Delta_3\leq\dots\leq\Delta_K$.

Throughout this section, we will the following definitions: $i_0 = \ceil{\log_2(1/\Delta_{1})} + 1$, $\mc{A}_i=\Bp{x\in\X\mid \Delta_x\leq 2\cdot 2^{-i}}$, $\bar{i}(k)=\max\Bp{i\in[i_0 - 1]\mid \Delta_k\leq 2^{-i}}$ and  
    $$f(\mc{A}_i)=\min_{\lambda\in\triangle_{\X}}\max_{x, x'\in\mc{A}_i}\Norm{x-x'}^2_{A(\lambda)^{-1}}.$$

\begin{theorem}
    \label{theo:bobw_error_prob_raw}
    Let $\mc{D}=\Bp{\ve{a}\in[0, 1]^{i_0+1}\mid 0=a_0<a_1\leq a_2 \leq \ldots \leq a_{i_0}=1}$. Then, if $m\geq i_0$, The error probability of Algorithm \ref{algo:p1_rage} in a stationary environment with parameter $\theta$ is bounded as 
    \begin{align}
        \P_{\theta}\Sp{J_T\neq 1} \leq& 2i_0 KT \exp\Sp{-\frac{T}{\overline{H}_{\textsf{P1-RAGE}}(\theta)}},\nonumber\\
        \overline{H}_{\textsf{P1-RAGE}}(\theta)=&\min_{\ve{a}\in\mc{D}} \max_{k\in[K]}\frac{48m\sum_{i'=1}^{\bar{i}(k)}(a_{i'}-a_{i'-1})f(\mc{A}_{i'-2}) + 8(m\sqrt{df(\mc{X})}+1)a_{\bar{i}(k)}\Delta_k}{3a_{\bar{i}(k)}^2\Delta_k^2}.\label{equ:H_p1rage}
    \end{align}
\end{theorem}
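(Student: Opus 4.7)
The analysis will track the quality of the IPS estimator $\htheta_s$ at a dyadic sequence of checkpoint times $t_i := \ceil{a_i T}$ for $i = 0, 1, \dots, i_0$, where $\ve{a}$ is the outer optimization variable in \eqref{equ:H_p1rage}, and apply Bernstein's inequality to the martingale difference sequence of the IPS estimator at each checkpoint. The delicate part is an inductive propagation of estimator accuracy through the virtual elimination subroutine \textsf{RAGE-Elimination}, which in turn controls the conditioning of the sampling distributions $\lambda_s$ used in subsequent phases.

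\emph{Step 1: good events and decomposition.} Starting from the union bound $\P_\theta(J_T \neq 1) \leq \sum_{k \neq 1} \P_\theta\Sp{(x_k - x_1)^\top \htheta_T \geq 0}$, for each $i \in [i_0]$ introduce the good event
$$G_i := \Bp{\max_{s \in (t_{i-1}, t_i]}\,\max_{x \in \X}\,\abs{(x - x_1)^\top (\htheta_s - \theta)} \leq c\cdot 2^{-i}}$$
for a small absolute constant $c > 0$. Under $G_{i_0}$ correct identification is immediate (since $2^{-i_0} < \Delta_1/2$ by definition of $i_0$), so it suffices to bound $\P_\theta(G_i^c \cap G_1 \cap \dots \cap G_{i-1})$ for each $i$.

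\emph{Step 2: propagation through \textsf{RAGE-Elimination} and Bernstein.} By induction on $i$, show that on $G_1 \cap \dots \cap G_i$ every call \textsf{RAGE-Elimination}$(\htheta_s, m)$ for $s \in (t_{i-1}, t_i]$ produces virtual sets satisfying $\mc{A}_{i'} \subseteq \mc{X}_s^{(i')} \subseteq \mc{A}_{i'-2}$ (with $x_1 \in \mc{X}_s^{(i')}$) for every iteration $i' \leq i$. Consequently the returned distribution obeys $\lambda_{s+1} \succeq \lambda_s^{(i')}/(2m)$, so
$$\Norm{x - x_1}^2_{A(\lambda_{s+1})^{-1}} \leq 2m\cdot f(\mc{A}_{i'-2}) \quad \text{for all } x \in \mc{A}_{i'-2},$$
together with the worst-case bound $\Norm{x - x_1}_{A(\lambda_{s+1})^{-1}} \leq 2\sqrt{d\,f(\X)}$ coming from the G-optimal mixture. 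For each suboptimal arm $k$ and $j = \bar i(k)$, apply Bernstein's inequality to the IPS partial sum $\sum_{s=1}^{t_j} (x_k - x_1)^\top\Mp{A(\lambda_s)^{-1} x_s r_s - \theta}$: Step 2 bounds the total conditional variance by $2m T \sum_{i' \leq j}(a_{i'}-a_{i'-1}) f(\mc{A}_{i'-2})$, and each summand has magnitude $O(m\sqrt{d\,f(\X)} + \sqrt{d})$ by Cauchy--Schwarz. Tuning the deviation threshold to $t_j \Delta_k / 2$ recovers precisely the ratio inside the exponent of \eqref{equ:H_p1rage} after the outer minimization over $\ve{a} \in \mc{D}$. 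A union bound over $K$ arms, $i_0$ checkpoints, and the $T$ time indices inside the supremum defining $G_i$ produces the $2 i_0 K T$ prefactor.

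\emph{Main obstacle.} The difficulty lies entirely in Step 2: the sampling distribution $\lambda_{s+1}$ is recomputed from a freshly updated estimate $\htheta_s$ at every step, and the virtual elimination is a discontinuous function of its input through $\argmax$ and hard thresholding at $2^{-i'}$, so a coarse bound on $\htheta_{t_{i-1}}$ alone is not enough to control $\mc{X}_s^{(i')}$ for $s$ in the interior of $(t_{i-1}, t_i]$. Ensuring the sandwich $\mc{A}_{i'} \subseteq \mc{X}_s^{(i')} \subseteq \mc{A}_{i'-2}$ at every intermediate $s$ requires a uniform-in-$s$ version of the good event (hence the supremum in the definition of $G_i$) and a careful chaining of the estimation slack $c\cdot 2^{-i}$ against the elimination thresholds $2^{-i'}$; this chaining is what forces the $i'-2$ index shift visible in \eqref{equ:H_p1rage} (roughly, one index for converting estimated gaps to true gaps and another for propagating the bound uniformly across the interval). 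Without this uniform control, the inductive step cannot close, the variance bound in Bernstein would degrade by a factor that depends on $\max_s \Norm{x_k - x_1}^2_{A(\lambda_s)^{-1}}$, and the complexity would no longer scale as in \eqref{equ:H_p1rage}.
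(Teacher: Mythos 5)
Your proposal follows essentially the same architecture as the paper's proof: phase-indexed good events tied to checkpoint times $n_i = a_i T$, an inductive sandwich on the virtual elimination sets to transfer design guarantees to the sampling distributions, Freedman/Bernstein for the IPS martingale with conditional variance accumulated phase by phase, and a union bound over arms, phases, and time steps yielding the $2i_0KT$ prefactor. The supporting claims in your Step 2 (the sandwich $\mc{A}_{i'}\subseteq\mc{X}_s^{(i')}\subseteq\mc{A}_{i'-2}$, the bound $\Norm{x-x_1}^2_{A(\lambda_{s+1})^{-1}}\leq 2m f(\mc{A}_{i'-2})$, the magnitude bound from the G-optimal mixture) correspond, up to an index shift of one and constants, to the paper's Lemmas on \textsf{RAGE-Elimination}.

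There is, however, one concrete problem with Step 1 as written. You define
$G_i = \Bp{\max_{s\in(t_{i-1},t_i]}\max_{x\in\X}\abs{(x-x_1)^\top(\htheta_s-\theta)}\leq c\cdot 2^{-i}}$,
taking the maximum over \emph{all} arms. For an arm $x$ with $\Delta_x\gg 2^{-i}$, which has already been virtually eliminated, the only variance control available for the direction $x-x_1$ comes from the iteration-$0$ design and the G-optimal mixture, i.e. $\Norm{x-x_1}^2_{A(\bar\lambda_s)^{-1}}\lesssim m f(\X)\lesssim md$; the refined bound $m f(\mc{A}_{i'-2})$ only applies to arms still in $\mc{A}_{i'}$. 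Consequently $\P(G_i^c)$ for $i$ near $i_0$ would be governed by $\exp(-c'T\Delta_1^2/(md))$, which is the minimax rate and strictly worse than the claimed $\overline{H}_{\textsf{P1-RAGE}}$ on instances where elimination helps (e.g. the Soare benchmark). Your own Bernstein computation in Step 2 is arm-specific and implicitly assumes the restricted version; the paper resolves this by defining $\xi_i$ only over arms with $\Delta_k\leq 2\cdot 2^{-i}$ (exactly the arms for which the elimination lemma gives variance $m f(\mc{A}_{i-2})$), and then proving correctness from the \emph{intersection} $\bigcap_{i=1}^{i_0}\xi_i$ rather than from the last event alone. Weakening $G_i$ in this way (and accordingly replacing ``$G_{i_0}$ implies correctness'' with the intersection argument) closes the gap; the rest of your outline then goes through as in the paper.
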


\begin{proof}
With $0=n_0<n_1\leq n_2 \leq\ldots \leq n_{i_0}=T$.\footnote{We do not specify the values of $n_1, \dots, n_{i_0-1}$ for now.} we define the event $\xi_i$ with $i \geq 1$ as follows: after $n_i$ samples all the arms with true gap smaller than $2\cdot2^{-i}$ are estimated with precision $2^{-i}/2$, which is
\begin{align*}
    \xi_i = \{\forall t \geq n_i, \forall k\in[K] \text{ s.t. } \Delta_k \leq 2 \cdot 2^{-i} \implies | \Delta_k - \widehat{\Delta}^{(t)}_k| < 2^{-i}/2\} ,
\end{align*}
where $\widehat{\Delta}^{(t)}_k = (x_1-x_k)^\top\widehat{\theta}^{(t)}$ for $k>1$ and $\widehat{\Delta}^{(t)}_1 = (x_1-x_2)^\top\widehat{\theta}^{(t)}$.
We first show how these events $\Bp{\xi_i}_{i=1}^{i_0}$ relate the correctness of Algorithm \ref{algo:p1_rage}.

\textbf{Correctness.} If $\bigcap_{i=1}^{i_0} \xi_i$ holds then the algorithm successfully identifies the best arm. Indeed, if we assume it does not, then there must exist non-optimal arm $k_0$ such that $\widehat{\Delta}^{(T)}_{k_0} < 0$. As $\bigcap_{i=1}^{i_0} \xi_i$ holds, for some $i'\leq i_0$, it holds that $2^{-i'} < \Delta_{k_0} \leq 2\cdot2^{-i'}$ and then $ | \Delta_{k_0} - \widehat{\Delta}^{(T)}_{k_0}| < 2^{-i'}/2$. Therefore, we have $2^{-i'} < \Delta_{k_0} \leq \Delta_{k_0} - \widehat{\Delta}^{(T)}_{k_0} \leq | \Delta_{k_0} - \widehat{\Delta}^{(T)}_{k_0}| \leq 2^{-i'}/2$, which is a contradiction.

Thus, the error probability is upper bounded by $\P\left(\bigcup_{i=1}^{i_0} \xi_i^c \right)$, which gives us

\begin{align*}
    \P\Sp{J_T\neq 1} \leq& \P\Sp{\bigcup_{i=1}^{i_0} \xi_i^c} = \P\Sp{\bigcup_{i=1}^{i_0}\Sp{\xi_i^c\setminus\bigcup_{j=1}^{i-1}\xi_j^c}} \leq  \sum_{i=1}^{i_0}\P\Sp{\xi_i^c\setminus \bigcup_{j=1}^{i-1}\xi_j^c} \\
    =& \sum_{i=1}^{i_0}\P\Sp{\xi_i^c\cap\Sp{\bigcup_{j=1}^{i-1}\xi_j^c}^c} = \sum_{i=1}^{i_0}\P\Sp{\xi_i^c\cap\Sp{\bigcap_{j=1}^{i-1}\xi_j}} \\
    \leq& \sum_{i=1}^{i_0}\P\Sp{ \xi_i^c\left| \bigcap_{j=1}^{i-1} \xi_j\right.}.
\end{align*}

\textbf{Bernstein's inequality.} Now, we just need to find an upper bound of $\P\left(\xi_i^c \left|\bigcap_{j=1}^{i-1} \xi_{j}\right.\right)$. Assume $\exists t \geq n_i, \exists k\in[K] \text{ s.t. } \Delta_k \leq 2 \cdot 2^{-i}$.\footnote{Otherwise, $\xi_i$ is vacuously true and $\P\Sp{\xi_i^c}=0$.} Then, we have
\begin{align}
    &\P(| \Delta_k - \widehat{\Delta}^{(t)}_k| \geq 2^{-i}/2)\nonumber\\
    =& \P( |(\theta - \widehat{\theta}_t)^\top (x_1-x_k)| \geq 2^{-i}/2 ) \label{equ:epsilon_val}\\
    =&\P\Sp{\abs{\sum_{s=1}^{t}\left(\theta-A(\lambda_s)^{-1}x_sr_s\right)^\top (x_1-x_k)}\geq 2^{-i}t/2}\nonumber\\
    \overset{\text{(a)}}{\leq}& 2\exp\Sp{-\frac{2^{-2i}t^2/8}{2\sum_{s=1}^{t}\Norm{x_1-x_k}^2_{A(\bar{\lambda}_s)^{-1}}+\Sp{\sqrt{d}\max_{s\in[1:t]}\Norm{x_1-x_k}_{A(\bar{\lambda}_s)^{-1}}+1}t2^{-i}/3}}\tag{By Bernstein's inequality for martingale differences \cite{freedman1975tail}}\\
    \leq& 2\exp\Sp{-\frac{2^{-2i}t^2/8}{\text{term I}}},\nonumber
\end{align}
\begin{align*}
    \text{where term I}=&2\sum_{i'=1}^{i}\sum_{s=n_{i'-1}+1}^{n_{i'}}\Norm{x_1-x_k}^2_{A(\bar{\lambda}_s)^{-1}}+2\sum_{s=n_i+1}^{t}\Norm{x_1-x_k}^2_{A(\bar{\lambda}_s)^{-1}}\\
    &\quad +\left(\sqrt{d}\max_{s\in[1:t]}\Norm{x_1-x_k}_{A(\bar{\lambda}_s)^{-1}}+1\right) \cdot\frac{t2^{-i}}{3}.
\end{align*}
Here, to use Bernstein's inequality for martingale differences in the inequality (a) above, we need to bound the variance and magnitude of $\left(\theta-A(\lambda_s)^{-1}x_sr_s\right)^\top (x_1-x_k)$ condition on $\lambda_s$.\footnote{Since IPS estimator is unbiased and $\lambda_s$ is determined by the history prior to time $s$, we have $\E\Mp{\left(\theta-A(\lambda_s)^{-1}x_sr_s\right)^\top (x_1-x_k)\mid \mc{H}_{s-1}}=0$, which implies that it is a martingale difference sequence.} In particular, we have
\begin{align*}
    \abs{\left(\theta-A(\lambda_s)^{-1}x_sr_s\right)^\top (x_1-x_k)} \leq& \abs{(x_1-x_k)^\top A(\lambda_s)^{-1}x_s} + \Delta_k\\
    \leq& \Norm{x_1-x_k}_{A(\lambda_s)^{-1}}\Norm{x_s}_{A(\lambda_s)^{-1}} + 2\\
    \leq& 2\sqrt{d}\Norm{x_1-x_k}_{A(\bar{\lambda}_s)^{-1}}+2.\tag{Since $\lambda_s=(\bar{\lambda}_s+\lambda^*)/2$ and $\lambda\mapsto\Norm{x_1-x_k}^2_{A(\lambda)^{-1}}$ is convex in $\lambda$}
\end{align*}
\begin{align*}
    &\E\Mp{\Sp{\left(\theta-A(\lambda_s)^{-1}x_sr_s\right)^\top (x_1-x_k)}^2\mid\lambda_s}\\
    \leq & \E\Mp{\Sp{(x_1-x_k)^\top A(\lambda_s)^{-1}x_s}^2\mid \lambda_s}\\
    =&(x_1-x_k)^\top A(\lambda_s)^{-1}\E\Mp{x_sx_s^\top\mid\lambda_s}A(\lambda_s)^{-1}(x_1-x_k)\\
    =&\Norm{x_1-x_k}^2_{A(\lambda_s)^{-1}}\\
    \leq& 2\Norm{x_1-x_k}^2_{A(\bar{\lambda}_s)^{-1}}.\tag{Since $\lambda_s=(\bar{\lambda}_s+\lambda^*)/2$}
\end{align*}

\textbf{Single-term error probability.} Now, we need to use the property of the subroutine \textsf{RAGE-Elimination} (Line \ref{algo:rage_elimination} of Algorithm \ref{algo:p1_rage}) that generates $\lambda_s$. That is, by Lemma~\ref{lmm:subroutine_guarantees}, since $x_k\in\mc{A}_i\subseteq\mc{A}_{i'}$ for $i'\leq i$ and $m\geq i_0$, for $s\in[n_{i'-1}+1, n_{i'}]$, we have $\Norm{x_1 - x_k}_{A(\bar{\lambda}_s)^{-1}}^2 \leq m\inf_{\lambda\in\triangle_{\X}} \max_{x, x'\in\mc{A}_{i'-2}}\Norm{x - x'}^2_{A(\lambda)^{-1}}\overset{\mathrm{def}}{=}m f(\mc{A}_{i'-2})$. Thus, we have
\begin{align*}
    &\P(| \Delta_k - \widehat{\Delta}^{(t)}_k| \geq 2^{-i}/2)\\
    \leq& 2\exp\Sp{-\frac{2^{-2i}t^2/8}{2m\sum_{i'=1}^{i}(n_{i'}-n_{i'-1})f(\mc{A}_{i'-2})+2m(t-n_i)f(\mc{A}_{i-1})+(m\sqrt{df(\mc{X})}+1)t2^{-i}/3}}\\
    \leq & 2\exp\Sp{-\frac{2^{-2i}n_i^2/8}{2m\sum_{i'=1}^{i}(n_{i'}-n_{i'-1})f(\mc{A}_{i'-2})+(m\sqrt{df(\mc{X})}+1)n_i2^{-i}/3}},
\end{align*}
where the last inequality above holds because of $t\geq n_i$ and a simple fact that $t\mapsto\frac{t^2}{at+b}$ is an increasing function when $t\geq 0$ if $a>0$ and $b>0$. 

\textbf{Final error probability.} Then, with the union bound over all $ t \geq n_i$ and $ k\in[K]$, it holds for any $0<n_1 \leq n_2 \ldots \leq n_i \leq T$ that
\begin{align*}
    &\P\Sp{\xi_i^c \left|\bigcap_{j=1}^{i-1} \xi_{j}\right.}
    \leq 2KT \exp\Sp{-\frac{2^{-2i}n_i^2/8}{2m\sum_{i'=1}^{i}(n_{i'}-n_{i'-1})f(\mc{A}_{i'-2})+(m\sqrt{d f(\mc{X})}+1)n_i2^{-i}/3}}\\
    &\qquad\leq 2KT\max_{k\in[K]}\exp\Sp{-\frac{3n_{\bar{i}(k)}^2\Delta_k^2}{48m\sum_{i'=1}^{\bar{i}(k)}(n_{i'}-n_{i'-1})f(\mc{A}_{i'-2})+8(m\sqrt{d f(\mc{X})}+1)n_{\bar{i}(k)}\Delta_k}},
\end{align*}
where $\bar{i}(k)=\max\Bp{i\in[i_0 - 1]\mid \Delta_k\leq 2^{-i}}$. Here, the last inequality use the same simple fact that $t\mapsto\frac{t^2}{at+b}$ is an increasing function when $t\geq 0$ if $a>0$ and $b>0$.

With values of $0=n_0 < n_1 \leq n_2 \leq \dots\leq n_{i_0}=T$, we can define $a_i=\frac{n_i}{T}$, which implies $0=a_0<a_1\leq a_2\leq\dots\leq a_{i_0}=1$. Since the choice of values $\ve{a}\in\mc{D}$ is arbitrary, the final error probability can be bounded as 
\begin{align*}
    &\P\Sp{J_T\neq 1} \leq \sum_{i=1}^{i_0}\P\Sp{ \xi_j^c\left| \bigcap_{j=1}^{i-1} \xi_j\right.}\\
    &\leq 2i_0 KT \min_{\ve{a}\in\mc{D}} \max_{k\in[K]}\exp\Sp{-\frac{3Ta_{i(k)}^2\Delta_k^2}{48m\sum_{i'=1}^{\bar{i}(k)}(a_{i'}-a_{i'-1})f(\mc{A}_{i'-2})+8(m\sqrt{d f(\mc{X})}+1)a_{i(k)}\Delta_k}},
\end{align*}
which completes the proof
\end{proof}


\subsubsection{Properties of \textsf{RAGE-Elimination}}
In this section, we prove some properties of the \textsf{RAGE-Elimination} algorithm that will be useful for proving Theorem \ref{theo:bobw_error_prob_raw}.

\begin{lemma}
\label{lem:X_in_A}
Assume $t \geq n_i$. Then, under $\bigcap_{j=1}^{i-1} \xi_{j}$, when running \textsf{RAGE-Elimination} (line \ref{algo:rage_elimination} in Algorithm \ref{algo:p1_rage}), it holds that 
$$\mc{X}_{t}^{(i+1)}\subseteq \Bp{x\in\X\mid \widehat{\Delta}^{(t)}_x \leq 2^{-i}} \subseteq \mc{A}_i.$$
\end{lemma}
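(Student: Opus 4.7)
The lemma has two inclusions to establish; the first is algorithmic and follows from the elimination rule in \textsf{RAGE-Elimination}, while the second is statistical and rests on the concentration events $\xi_j$.

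For the first inclusion $\mc{X}_t^{(i+1)}\subseteq\{x:\widehat{\Delta}_x^{(t)}\leq 2^{-i}\}$, the plan is a direct unrolling of line 6 of Algorithm \ref{algo:rage_elimination}: any $x\in\mc{X}_t^{(i+1)}$ satisfies $(\hat{x}_t^*-x)^\top\htheta_t\leq 2^{-i}$, and since $\hat{x}_t^*=\argmax_{y\in\mc{X}} y^\top\htheta_t$ we trivially have $(\hat{x}_t^*)^\top\htheta_t\geq x_1^\top\htheta_t$; combining the two gives $\widehat{\Delta}_x^{(t)}=(x_1-x)^\top\htheta_t\leq(\hat{x}_t^*-x)^\top\htheta_t\leq 2^{-i}$. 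This step is essentially a one-liner and uses no randomness.

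For the second inclusion $\{x:\widehat{\Delta}_x^{(t)}\leq 2^{-i}\}\subseteq\mc{A}_i$, I would argue by contrapositive. Assume $x\notin\mc{A}_i$, i.e., $\Delta_x>2\cdot 2^{-i}$, and aim to derive $\widehat{\Delta}_x^{(t)}>2^{-i}$. The key step is to pick the correct scale $j^\star(x):=\lfloor\log_2(2/\Delta_x)\rfloor$ at which to invoke concentration. By construction $\Delta_x\in(2^{-j^\star(x)},\,2\cdot 2^{-j^\star(x)}]$, so $x$ lies in the set of arms whose gaps $\xi_{j^\star(x)}$ estimates with precision $2^{-j^\star(x)}/2$, while the hypothesis $\Delta_x>2\cdot 2^{-i}$ forces $j^\star(x)\leq i-1$, placing $\xi_{j^\star(x)}$ inside the conditioning $\bigcap_{j=1}^{i-1}\xi_j$. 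Applying this event at time $t\geq n_i\geq n_{j^\star(x)}$ yields
\[\widehat{\Delta}_x^{(t)}>\Delta_x-2^{-j^\star(x)}/2>2^{-j^\star(x)}-2^{-j^\star(x)-1}=2^{-j^\star(x)-1}\geq 2^{-i},\]
contradicting $\widehat{\Delta}_x^{(t)}\leq 2^{-i}$.

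The main obstacle is this scale-selection step: the coarser choice $j=i-1$ only controls arms with $\Delta_x\leq 2\cdot 2^{-(i-1)}$, which is strictly weaker than what is needed when $\Delta_x$ is much larger than $2\cdot 2^{-i}$, so one must match $j^\star(x)$ to $\log_2(1/\Delta_x)$ to make the argument tight across all scales. A secondary bookkeeping detail is the $k=1$ corner case, where $\widehat{\Delta}_1^{(t)}$ is defined as $(x_1-x_2)^\top\htheta_t$ rather than $0$: since $\Delta_1=\Delta_2$ forces $\widehat{\Delta}_1^{(t)}=\widehat{\Delta}_2^{(t)}$, whichever side of $2^{-i}$ one of $x_1,x_2$ lies on, the other does as well, so this case piggybacks on the $k=2$ argument.
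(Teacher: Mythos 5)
Your proposal is correct and follows essentially the same route as the paper: the first inclusion is the same telescoping through $\hat{x}^*_t$, and your scale $j^\star(x)$ is exactly the paper's choice of $\tilde{i}$ with $2^{-\tilde{i}}<\Delta_x\leq 2\cdot 2^{-\tilde{i}}$, yielding the identical contradiction (you merely phrase it as a contrapositive). Your handling of the $k=1$ corner case is a minor bookkeeping point the paper glosses over, but it does not change the argument.
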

\begin{proof}
To show $\mc{X}_{t}^{(i+1)}\subseteq \Bp{x\in\X\mid \widehat{\Delta}^{(t)}_x \leq 2^{-i}}$, let $x_{\widehat{(1)}_t}=\argmax_{x\in\mc{X}}\inner{\htheta_t, x}$. Then, for some arm $x$, if we have $\inner{\widehat{\theta}^{(t)}, x_{\hatone_t}-x}\leq 2^{-i}$, it holds that
$$\inner{\widehat{\theta}^{(t)}, x_1-x}=\underbrace{\inner{\widehat{\theta}^{(t)}, x_1-x_{\hatone_t}}}_{\leq 0}+\underbrace{\inner{\widehat{\theta}^{(t)}, x_{\hatone_t}-x}}_{\leq 2^{-i}}\leq 2^{-i},$$
which implies $x\in\{x\in\X\mid \widehat{\Delta}^{(t)}_x \leq 2^{-i}\}$.

To show $\Bp{x\in\X\mid \widehat{\Delta}^{(t)}_x \leq 2^{-i}} \subseteq \mc{A}_i$, let $\widehat{\Delta}^{(t)}_x \leq 2^{-i}$ for some $x$ and assume for the sake of a contradiction that $\Delta_x > 2 \cdot 2^{-i}$. As $\Delta_x > 2 \cdot 2^{-i}$, there must exist $\tilde{i} \leq i-1$ such that
$2^{-\tilde{i}} < \Delta_x \leq 2\cdot2^{-\tilde{i}}$. Then $| \Delta_x - \widehat{\Delta}^{(t)}_x| < 2^{-\tilde{i}}/2$ since event $\xi_{\tilde{i}}$ holds. Meanwhile, we have $\widehat{\Delta}^{(t)}_x \leq 2^{-i} \leq 2^{-\tilde{i}}/2$ since $\tilde{i} \leq i-1$. Now, this leads to the contradiction
$$2^{-\tilde{i}}/2 = 2^{-\tilde{i}} - 2^{-\tilde{i}}/2 \leq \Delta_x - \widehat{\Delta}^{(t)}_x \leq  | \Delta_x - \widehat{\Delta}^{(t)}_j| < 2^{-\tilde{i}}/2.$$
Thus, under $\bigcap_{j=1}^{i-1} \xi_{j}$, we have
$$\Bp{x\in\X\mid \widehat{\Delta}^{(t)}_x \leq 2^{-i}} \subseteq \Bp{x\in\X\mid \Delta_x \leq 2 \cdot 2^{-i}}=\mc{A}_i.$$

\end{proof}

\begin{lemma}
\label{lem:A_in_X}
Assume $t\geq n_i$. Then, under $\bigcap_{j=1}^{i-1} \xi_{j}$, when running \textsf{RAGE-Elimination}, if $x\in\mc{A}_{i}$, then $x\in\mc{X}_{t}^{(i-1)}$.
\end{lemma}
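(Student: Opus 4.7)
The plan is to show, by strong induction on $i' \in \{0, 1, \ldots, i-1\}$, that $x \in \mc{X}_t^{(i')}$; the case $i' = i-1$ is exactly the lemma. The base case $i' = 0$ is immediate since $\mc{X}_t^{(0)} = \mc{X}$. For the inductive step, assuming $x \in \mc{X}_t^{(i')}$ with $i' \leq i-2$, the elimination rule of Algorithm~\ref{algo:rage_elimination} reduces the task to verifying $\inner{\hthetat, \hat{x}^*_t - x} \leq 2^{-i'}$; since $2^{-i'} \geq 2^{-(i-2)}$ for $i' \leq i-2$, it suffices to establish the uniform inequality $\inner{\hthetat, \hat{x}^*_t - x} \leq 4 \cdot 2^{-i} = 2^{-(i-2)}$ independently of $i'$.

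For the uniform bound I would decompose
$$\inner{\hthetat, \hat{x}^*_t - x} = \inner{\hthetat, x_1 - x} - \inner{\hthetat, x_1 - \hat{x}^*_t}$$
and control the two pieces separately. Since $x \in \mc{A}_i \subseteq \mc{A}_{i-1}$, event $\xi_{i-1}$ (which is part of the assumed intersection) yields $\abs{\inner{\hthetat, x_1 - x} - \Delta_x} < 2^{-i}$ (trivially if $x = x_1$), so $\inner{\hthetat, x_1 - x} < \Delta_x + 2^{-i} \leq 3 \cdot 2^{-i}$. For the second term, because $\hat{x}^*_t$ is the $\hthetat$-argmax we automatically have $\inner{\hthetat, x_1 - \hat{x}^*_t} \leq 0$; combining this one-sided bound with the two-sided concentration from $\xi_{i-1}$ applied to $\hat{x}^*_t$ and the non-negativity of $\Delta_{\hat{x}^*_t}$ produces $-\inner{\hthetat, x_1 - \hat{x}^*_t} < 2^{-i}$. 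Adding the two bounds gives exactly $4 \cdot 2^{-i}$, closing the induction.

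The main obstacle is justifying that $\xi_{i-1}$ is applicable to $\hat{x}^*_t$, which demands the a priori control $\Delta_{\hat{x}^*_t} \leq 2 \cdot 2^{-(i-1)}$. The remedy is the already established Lemma~\ref{lem:X_in_A}, invoked with its index set to $i-1$: this is legitimate because $t \geq n_i \geq n_{i-1}$ and $\bigcap_{j=1}^{i-2} \xi_j$ is implied by the hypothesis $\bigcap_{j=1}^{i-1} \xi_j$. It gives $\mc{X}_t^{(i)} \subseteq \mc{A}_{i-1}$, and because $\hat{x}^*_t$ trivially survives every virtual elimination round (its elimination criterion $\inner{\hthetat, \hat{x}^*_t - \hat{x}^*_t} = 0$ is always met), we obtain $\hat{x}^*_t \in \mc{X}_t^{(i)} \subseteq \mc{A}_{i-1}$, as desired. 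The only remaining technicality is that the while loop of \textsf{RAGE-Elimination} may halt at some $i^* < i-1$ with $|\mc{X}_t^{(i^*)}| = 1$; in that case the inductive hypothesis at step $i^*$ already forces $x = \hat{x}^*_t$, and the conclusion is inherited through the natural convention $\mc{X}_t^{(j)} := \mc{X}_t^{(i^*)}$ for $j > i^*$.
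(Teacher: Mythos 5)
Your proposal is correct and takes essentially the same route as the paper's proof: both decompose $\inner{\hthetat,\hat{x}^*_t-x}$ through $x_1$, bound $\inner{\hthetat,x_1-x}$ via $\xi_{i-1}$ and $\Delta_x\leq 2\cdot 2^{-i}$, and control $\inner{\hthetat,\hat{x}^*_t-x_1}$ by placing $\hat{x}^*_t$ in a set where $\xi_{i-1}$ applies (via Lemma~\ref{lem:X_in_A}), arriving at the same $4\cdot 2^{-i}$ threshold. The only difference is presentational: you make explicit the induction over elimination rounds and the early-termination convention, which the paper leaves implicit.
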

\begin{proof}
If $x\in\mc{A}_i$, then $\inner{\theta, x_1-x}\leq 2\cdot 2^{-i}$. Again, let $x_{\hatone_t}=\argmax_{x\in\mc{X}}\inner{\htheta_t, x}$ and we have
\begin{align*}
    \inner{\htheta_t, \hat{x}_1^{(t)}-x}=&\inner{\htheta_t, x_{\hatone_t}-x_1}+\inner{\htheta_t, x_1-x}\\
    =&\inner{\htheta_t, x_{\hatone_t}-x_1}+\inner{\htheta_t-\theta, x_1-x}+\underbrace{\inner{\theta, x_1-x}}_{\leq 2\cdot 2^{-i}}\\
    \leq & \inner{\htheta_t, x_{\hatone_t}-x_1} + |\hdeltat_x-\Delta_x| + 2\cdot 2^{-i}\\
    \leq & \inner{\htheta_t, x_{\hatone_t}-x_1} + 2^{-i} + 2\cdot 2^{-i}\tag{Since $\xi_{i-1}$ holds}\\
    = & -\hdeltat_{x_{\hatone_t}}+ 2^{-i} + 2\cdot 2^{-i}\\
    \leq & 2^{-i} + 2^{-i} + 2\cdot 2^{-i}\\
    =& 4\cdot 2^{-i}.
\end{align*}
The last inequality above holds because under $\bigcap_{j=1}^{i-1} \xi_{j}$, by Lemma \ref{lem:X_in_A}, we have $x_{\hatone_t}\in\mc{A}_i$, meaning that $|\hdeltat_{x_{\hatone_t}}-\Delta_{x_{\hatone_t}}|< 2^{-i}\implies\hdeltat_{x_{\hatone_t}}>\Delta_{x_{\hatone_t}} - 2^{-i}>-2^{-i}$.
\end{proof}

\begin{lemma}\label{lmm:subroutine_guarantees}
Assume $t\geq n_i$ and $\bigcap_{j=1}^{i-1} \xi_{j}$ holds. When running \textsf{RAGE-Elimination}, If $x_k\in\mc{A}_{i}$, then
$$\Norm{x_1-x_k}^2_{A(\bar{\lambda}_t)^{-1}}\leq m \min_{\lambda\in\triangle_{\mc{X}}}\max_{x, x'\in\mc{A}_{i-2}}\Norm{x-x'}^2_{A(\lambda)^{-1}}.$$
\end{lemma}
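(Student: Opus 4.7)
The plan is to combine the structural containments for the virtual elimination sets established in Lemmas~\ref{lem:X_in_A} and~\ref{lem:A_in_X} with the $\mc{XY}$-optimality of $\lambda_t^{(i-1)}$, and then compare $\bar{\lambda}_t$ against that single design in the positive semidefinite order.

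First, I would establish that both $x_1$ and $x_k$ lie in $\mc{X}_t^{(i-1)}$, the set on which $\lambda_t^{(i-1)}$ is defined. For $x_k$ this is immediate from Lemma~\ref{lem:A_in_X} at level $i$: the hypothesis $x_k \in \mc{A}_i$ gives $x_k \in \mc{X}_t^{(i-1)}$. For $x_1$, when $x_1 \in \mc{A}_i$ (e.g.\ whenever $i \leq i_0 - 1$) the same lemma yields the conclusion; otherwise, a short induction on the virtual loop index using $\xi_{i-1}$ shows $x_1$ survives every filter up to level $i-1$---either $\hat{x}_t^\ast = x_1$ and the filter is trivially met, or the empirical winner $x_{k'} \neq x_1$ has $\hdeltat_{k'} \leq 0$, which under $\xi_{i-1}$ forces $-\hdeltat_{k'} < 2^{-i}$, so $\htheta_t^\top(\hat{x}_t^\ast - x_1) = -\hdeltat_{k'}$ lies below every filter slack $2^{-j}$ with $j \leq i-2$. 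Assuming $k \neq 1$ (otherwise the lemma is vacuous), we then have $|\mc{X}_t^{(i-1)}| \geq 2$, and since $i - 1 \leq i_0 - 1 \leq m$, the while loop actually executed iteration $i - 1$, so $\lambda_t^{(i-1)}$ is well defined and the terminal index $i^\ast$ satisfies $i^\ast \geq i$.

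Next, by the definition of $\lambda_t^{(i-1)}$ as the $\mc{XY}$-optimal design on $\mc{X}_t^{(i-1)}$, combined with the containment $\mc{X}_t^{(i-1)} \subseteq \mc{A}_{i-2}$ obtained from Lemma~\ref{lem:X_in_A} at level $i-2$, I get
\begin{align*}
\Norm{x_1 - x_k}^2_{A(\lambda_t^{(i-1)})^{-1}}
&\leq \max_{x, x' \in \mc{X}_t^{(i-1)}} \Norm{x - x'}^2_{A(\lambda_t^{(i-1)})^{-1}} \\
&= \inf_{\lambda \in \triangle_{\X}} \max_{x, x' \in \mc{X}_t^{(i-1)}} \Norm{x - x'}^2_{A(\lambda)^{-1}} \\
&\leq \inf_{\lambda \in \triangle_{\X}} \max_{x, x' \in \mc{A}_{i-2}} \Norm{x - x'}^2_{A(\lambda)^{-1}}.
\end{align*}

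Finally, since $A(\cdot)$ is linear in $\lambda$ and PSD-valued,
\begin{equation*}
A(\bar{\lambda}_t) = \frac{1}{i^\ast} \sum_{i'=0}^{i^\ast - 1} A(\lambda_t^{(i')}) \;\succeq\; \frac{1}{i^\ast} A(\lambda_t^{(i-1)}),
\end{equation*}
so operator-monotonicity of matrix inversion on the positive definite cone yields $A(\bar{\lambda}_t)^{-1} \preceq i^\ast A(\lambda_t^{(i-1)})^{-1}$. Combining with $i^\ast \leq m$ (from the while-loop cap under $m \geq i_0$) and the previous display gives the claimed bound. The main obstacle I anticipate is the delicate verification that $x_1 \in \mc{X}_t^{(i-1)}$ in the regime $x_1 \notin \mc{A}_i$ (possible at $i = i_0$): it requires ruling out, under the good event, that the empirical winner can beat $x_1$ by more than the filter slack, rather than a plain appeal to Lemma~\ref{lem:A_in_X}. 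The remainder is a clean composition of $\mc{XY}$-design optimality with the convexity of $A \mapsto A^{-1}$.
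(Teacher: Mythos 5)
Your argument is correct and follows essentially the same route as the paper: Lemma~\ref{lem:A_in_X} to place $x_1,x_k$ in $\mc{X}_t^{(i-1)}$, Lemma~\ref{lem:X_in_A} to get $\mc{X}_t^{(i-1)}\subseteq\mc{A}_{i-2}$ together with the optimality of $\lambda_t^{(i-1)}$, and the PSD comparison $A(\bar{\lambda}_t)\succeq \frac{1}{i^\ast}A(\lambda_t^{(i-1)})$ with $i^\ast\leq m$. The case $x_1\notin\mc{A}_i$ that you flag as the delicate step never arises non-vacuously, since $\Delta_1=\Delta_2\leq\Delta_k$ means $x_k\in\mc{A}_i$ already forces $x_1\in\mc{A}_i$, so the plain appeal to Lemma~\ref{lem:A_in_X} (as in the paper) suffices.
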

\begin{proof}
By Lemma \ref{lem:A_in_X}, we have $x_1, x_k\in\mc{A}_i\implies x_1, x_k\in\mc{X}_{t}^{(i-1)}$, which means that $\abs{\X_{t}^{(i-1)}}\geq 2$ and $\bar{\lambda}_t=\frac{1}{i_t}\sum_{i'=1}^{i_t}\lambda_{t}^{(i')}$ for some $i_t$ satisfying $i-1\leq i_t\leq m$. Thus, We have
\begin{align*}
    \Norm{x_1-x_k}^2_{A(\bar{\lambda}_t)^{-1}}\leq & m\Norm{x_1-x_k}^2_{A\left(\lambda_{t}^{(i-1)}\right)^{-1}}\\
    \leq& m \max_{x, x'\in\mc{X}_{t}^{(i-1)}}\Norm{x-x'}^2_{A\left(\lambda_{t}^{(i-1)}\right)^{-1}}\tag{Since $x_1, x_k\in\mc{X}_{i-1}^{(t)}$}\\
    \overset{\text{(i)}}{\leq}& m \min_{\lambda\in\triangle_{\mc{X}}}\max_{x, x'\in\mc{A}_{i-2}}\Norm{x-x'}^2_{A(\lambda)^{-1}}.
\end{align*}
Here, the above inequality (i) holds because by Lemma \ref{lem:X_in_A}, we have $\mc{X}_{t}^{(i-1)}\subseteq\mc{A}_{i-2}$ and by algorithm construction, we have $\lambda_{t}^{(i-1)}\in\argmin_{\lambda\in\triangle_{\mc{X}}}\max_{x, x'\in\mc{X}_{t}^{(i-1)}}\Norm{x-x'}^2_{A(\lambda)^{-1}}$.
\end{proof}

\subsubsection{Simplified Stationary Complexity and its Relation to Multi-armed Bandits}

In this section, we simplify the complexity of Algorithm \ref{algo:p1_rage} obtained in Theorem \ref{theo:bobw_error_prob_raw} by appropriately choosing values $\ve{a}\in\mc{D}$. In particular, we have the following theorem.
\begin{theorem}
    For $\overline{H}_{\textsf{P1-RAGE}}(\theta)$ defined in equation \eqref{equ:H_p1rage}, we have
    \fontsize{9.5}{9.5}
    $$\overline{H}_{\textsf{P1-RAGE}}(\theta)\leq \frac{1024mi_0}{\Delta_1}\inf_{\lambda\in\triangle_{\X}}\max_{x\neq x_1}\frac{\Norm{x-x_1}^2_{A(\lambda)^{-1}}}{\Delta_x} + \frac{16m\sqrt{d}}{3\Delta_1}\inf_{\lambda\in\triangle_{\X}}\max_{x\neq x_1}\Norm{x-x_1}_{A(\lambda)^{-1}}+\frac{1}{3\Delta_1}.$$
    \normalsize
\end{theorem}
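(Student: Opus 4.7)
\textbf{The plan is} to exhibit an explicit element of $\mc{D}$ and evaluate the resulting expression. A natural choice is the geometric sequence $a_0 = 0$ and $a_i = 2^{i-i_0}$ for $i \in \{1,\ldots,i_0\}$, which lies in $\mc{D}$ since $a_{i_0} = 1$. This choice makes each increment $(a_{i'} - a_{i'-1}) \cdot 2^{-i'}$ in the variance sum of \eqref{equ:H_p1rage} of order $2^{-i_0}$, so $\sum_{i'=1}^{\bar{i}(k)}(a_{i'} - a_{i'-1}) 2^{-i'}$ scales linearly in $\bar{i}(k)$, which will produce the $i_0$ factor in the claim.

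\textbf{Next, two structural bounds} will translate $f(\mc{A}_{i'-2})$ and $f(\X)$ appearing in \eqref{equ:H_p1rage} into the target complexity measures on the right-hand side. Writing $\rho_1 = \inf_{\lambda \in \triangle_{\X}} \max_{x \neq x_1} \Norm{x-x_1}^2_{A(\lambda)^{-1}}/\Delta_x$ and $\rho_2 = \inf_{\lambda \in \triangle_{\X}} \max_{x \neq x_1} \Norm{x-x_1}_{A(\lambda)^{-1}}$, the triangle inequality $\Norm{x-x'}^2_{A(\lambda)^{-1}} \leq 2\Norm{x-x_1}^2_{A(\lambda)^{-1}} + 2\Norm{x'-x_1}^2_{A(\lambda)^{-1}}$ combined with the containment $\mc{A}_{i'-2} \subseteq \Bp{y \in \X : \Delta_y \leq 2^{3-i'}}$ will yield
\[
f(\mc{A}_{i'-2}) \leq 4\cdot 2^{3-i'}\rho_1 = 32\cdot 2^{-i'}\rho_1,
\]
and the same triangle inequality applied to all of $\X$ gives $f(\X) \leq 4\rho_2^2$, hence $\sqrt{d\, f(\X)} \leq 2\sqrt{d}\,\rho_2$.

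\textbf{Substituting} the chosen $\ve{a}$ and these bounds into \eqref{equ:H_p1rage}, the variance sum in the numerator collapses to at most $48m \cdot 32 \rho_1 \cdot 2^{-i_0-1}(\bar{i}(k)+1)$. Meanwhile, the defining property $2^{-\bar{i}(k)-1} < \Delta_k \leq 2^{-\bar{i}(k)}$ gives $a_{\bar{i}(k)}\Delta_k = 2^{\bar{i}(k)-i_0}\Delta_k \geq 2^{-i_0-1}$ and hence $(a_{\bar{i}(k)}\Delta_k)^{-2} \leq 4 \cdot 4^{i_0}$. Combining with $\bar{i}(k)+1 \leq i_0$ and $2^{i_0}\leq 4/\Delta_1$, the maximum over $k \in [K]$ of the first contribution becomes of order $m i_0 \rho_1/\Delta_1$. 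The magnitude term $\tfrac{8(m\sqrt{d f(\X)}+1)a_{\bar{i}(k)}\Delta_k}{3 a_{\bar{i}(k)}^2\Delta_k^2} = \tfrac{8(m\sqrt{d f(\X)}+1)}{3 a_{\bar{i}(k)}\Delta_k}$ reduces analogously to a sum of terms of orders $m\sqrt{d}\rho_2/\Delta_1$ and $1/\Delta_1$, the latter absorbing the ``$+1$'' contribution.

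\textbf{The hardest part} will be tracking multiplicative constants carefully enough to land on the specific values $1024$, $16/3$, and $1/3$ rather than merely asymptotic orders. Each source of slack --- the factor $4$ in the triangle inequality, the factor $2^3$ from $\mc{A}_{i'-2} \subseteq \Bp{y : \Delta_y \leq 2^{3-i'}}$, the bound $2^{i_0}\leq 4/\Delta_1$, and the estimate $(2^{\bar{i}(k)}\Delta_k)^{-2}\leq 4$ --- compounds through the calculation, so it may be necessary either to choose a slightly different sequence $\ve{a}$ or to group the ``$+1$'' term and other lower-order pieces delicately so that they land in the residual $\tfrac{1}{3\Delta_1}$ summand rather than inflating the leading $\rho_1$ and $\rho_2$ coefficients.
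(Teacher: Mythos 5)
Your overall strategy is sound and, up to the values of the absolute constants, it does prove the theorem: the two structural bounds $f(\mc{A}_{i'-2})\leq 32\cdot 2^{-i'}\rho_1$ (triangle inequality plus $\Delta_y\leq 2^{3-i'}$ on $\mc{A}_{i'-2}$) and $\sqrt{f(\X)}\leq 2\rho_2$ are exactly the ones the paper uses, and plugging any fixed $\ve{a}\in\mc{D}$ into \eqref{equ:H_p1rage} is the right move. Where you genuinely diverge from the paper is the choice of $\ve{a}$. You take the uniform geometric sequence $a_i=2^{i-i_0}$; the paper instead takes the \emph{gap-adaptive} sequence $a_i=\Delta_1/\Delta_{\bar{k}(i)}$ with $\bar{k}(i)=\min\{k:\Delta_k\geq 2^{-i}/2\}$, which has the effect that $a_{\bar{i}(k)}\Delta_k\geq\Delta_1$ holds \emph{exactly} for every $k$, and then extracts the $i_0$ factor by an Abel-summation/telescoping argument on $\sum_{i'}(1/\Delta_{\bar{k}(i')}-1/\Delta_{\bar{k}(i'-1)})f(\mc{A}_{i'-2})$ using monotonicity of $f(\mc{A}_i)$, rather than by bounding each summand separately. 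Your route is arguably more transparent, but the difficulty you flag at the end is real and not merely cosmetic: with $a_i=2^{i-i_0}$ the best available lower bound on the denominator is $a_{\bar{i}(k)}\Delta_k>2^{-i_0-1}\geq\Delta_1/8$, and for an arm with $\Delta_k$ just above $2^{-\bar{i}(k)-1}$ this is tight, so the first term comes out around $4096\,mi_0\rho_1/\Delta_1$ (even with the sharper increments $a_{i'}-a_{i'-1}=2^{i'-1-i_0}$) and the magnitude term around $\tfrac{128m\sqrt{d}}{3\Delta_1}\rho_2+\tfrac{64}{3\Delta_1}$ --- a loss of a factor of roughly $4$ to $8$ against the stated $1024$, $16/3$, and $1/3$. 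So as written your proof establishes the inequality only with inflated constants; to land on the stated ones you would need the paper's adaptive $\ve{a}$ (or an equivalent device making $a_{\bar{i}(k)}\Delta_k\geq\Delta_1$ without slack). Since the downstream Theorem~\ref{theo:bobw_upper_bound} only asserts an unspecified absolute constant $c$, this costs nothing there, but it does fall short of the literal statement being proved. (As a side note, the paper's own handling of the ``$+1$'' term appears to drop a factor of $8$, so the residual should arguably read $\tfrac{8}{3\Delta_1}$ in any case.)
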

\begin{proof}
    For $i\in\Bp{1, \dots, i_0-1}$, we take $a_i=\frac{\Delta_1}{\Delta_{\bar{k}(i)}}$, where $\bar{k}(i)=\min\Bp{k\in[K]\mid \Delta_k\geq\frac{2^{-i}}{2}}$. Then, since $\bar{i}(k)=\max\Bp{i\in[i_0-1]\mid \Delta_k\leq 2^{-i}}$, for any $k\in[K]$, we have $\frac{2^{-\bar{i}(k)}}{2}\leq\Delta_{\bar{k}(\bar{i}(k))}\leq\Delta_k$, which further implies 
    $$a_{\bar{i}(k)}\Delta_k=\frac{\Delta_1}{\Delta_{\bar{k}(\bar{i}(k))}}\cdot\Delta_k\geq\Delta_1.$$
    Then, for $\overline{H}_{\textsf{P1-RAGE}}(\theta)$ (defined in equation \eqref{equ:H_p1rage}), we have
    \begin{align*}
        &\overline{H}_{\textsf{P1-RAGE}}(\theta)\leq \max_{k\in[K]}\Bp{\frac{16m\sum_{i'=1}^{\bar{i}(k)}(a_{i'}-a_{i'-1})f(\mc{A}_{i'-2}) }{a_{\bar{i}(k)}^2\Delta_k^2} + \frac{8(m\sqrt{df(\mc{X})}+1)}{3a_{\bar{i}(k)}\Delta_k}}\\
        &\quad \leq  \frac{16m}{\Delta_1}\max_{k\in[K]}\Bp{\frac{f(\mc{A}_{-1})}{\Delta_{\bar{k}(1)}}+\sum_{i'=2}^{\bar{i}(k)}\Sp{\frac{1}{\Delta_{\bar{k}(i')}}-\frac{1}{\Delta_{\bar{k}(i'-1)}}}f(\mc{A}_{i'-2})} + \frac{8(m\sqrt{df(\mc{X})}+1)}{3\Delta_1}.\tag{Since $a_0=0$ by definition}
    \end{align*}
    For the second term, using the definition of $f(\X)$, we simply have
    \begin{align}
        \frac{8(m\sqrt{df(\mc{X})}+1)}{3\Delta_1}=&\frac{8m\sqrt{d}}{3\Delta_1}\inf_{\lambda\in\triangle_{\X}}\max_{x, x'\in\X}\Norm{x-x_1+x_1-x'}_{A(\lambda)^{-1}}+\frac{1}{3\Delta_1}\nonumber\\
        \leq & \frac{16m\sqrt{d}}{3\Delta_1}\inf_{\lambda\in\triangle_{\X}}\max_{x\neq x_1}\Norm{x-x_1}_{A(\lambda)^{-1}}+\frac{1}{3\Delta_1}.\label{equ:expand_norm_square}
    \end{align}
    For the first term, by fixing arm index $k\in[K]$ and defining $j\in\argmax_{\ell\in[\bar{i}(k)]}\frac{f(\mc{A}_{\ell-2})}{\Delta_{\bar{k}(\ell)}}$, we have
    \begin{align*}
        &\frac{f(\mc{A}_{-1})}{\Delta_{\bar{k}(1)}}+\sum_{i'=2}^{\bar{i}(k)}\Sp{\frac{1}{\Delta_{\bar{k}(i')}}-\frac{1}{\Delta_{\bar{k}(i'-1)}}}f(\mc{A}_{i'-2})\\
        =&\frac{f(\mc{A}_{\bar{i}(k)-2})}{\Delta_{\bar{k}(\bar{i}(k))}}+\sum_{i'=1}^{\bar{i}(k)-1}\frac{f(\mc{A}_{i'-2})-f(\mc{A}_{i'-1})}{\Delta_{\bar{k}(i')}}\\
        \overset{\text{(a)}}{\leq} & \frac{f(\mc{A}_{j-2})}{\Delta_{\bar{k}(j)}}\Sp{1+\sum_{i'=1}^{\bar{i}(k)-1}\frac{f(\mc{A}_{i'-2})-f(\mc{A}_{i'-1})}{f(\mc{A}_{i'-2})}}\\
        \leq & \bar{i}(k)\frac{f(\mc{A}_{j-2})}{\Delta_{\bar{k}(j)}} \tag{Since $f(\mc{A}_{i'-2})\geq f(\mc{A}_{i'-1})$}\\
        \leq & i_0\max_{\ell\in[\bar{i}(k)]}\frac{f(\mc{A}_{\ell-2})}{\Delta_{\bar{k}(\ell)}} \tag{Since $\bar{i}(k)\leq i_0$ for any $k\in[K]$}\\
        = & i_0 \max_{\ell\in[\bar{i}(k)]} \inf_{\lambda\in\triangle_{\X}}\max_{x, x'\in\mc{A}_{\ell-2}}\frac{\Norm{x-x'}^2_{A(\lambda)^{-1}}}{\Delta_{\bar{k}(\ell)}}\\
        \leq & i_0\inf_{\lambda\in\triangle_{\X}} \max_{\ell\in[\bar{i}(k)]}\max_{x, x'\in\mc{A}_{\ell-2}}\frac{\Norm{x-x'}^2_{A(\lambda)^{-1}}}{\Delta_{\bar{k}(\ell)}} \tag{By the weak duality inequality}\\
        \leq &64i_0\inf_{\lambda\in\triangle_{\X}} \max_{\ell\in[\bar{i}(k)]}\max_{x\in\mc{A}_{\ell-2}, x\neq x_1}\frac{\Norm{x-x_1}^2_{A(\lambda)^{-1}}}{16\Delta_{\bar{k}(\ell)}} \tag{By reasoning similar to equation \eqref{equ:expand_norm_square}}\\
        \overset{\text{(b)}}{\leq} & 64i_0\inf_{\lambda\in\triangle_{\X}} \max_{\ell\in[\bar{i}(k)]}\max_{x\in\mc{A}_{\ell-2}, x\neq x_1}\frac{\Norm{x-x_1}^2_{A(\lambda)^{-1}}}{\Delta_x}\\
        \leq & 64i_0\inf_{\lambda\in\triangle_{\X}}\max_{x\neq x_1}\frac{\Norm{x-x_1}^2_{A(\lambda)^{-1}}}{\Delta_x}.
    \end{align*}
    Here, the inequality (a) above holds because $f(\mc{A}_{i'-2})\geq f(\mc{A}_{i'-1})$ and by definition of $j$, we have $\frac{f(\mc{A}_{\ell-2})}{\Delta_{\bar{k}(\ell)}}\leq\frac{f(\mc{A}_{j-2})}{\Delta_{\bar{k}(j)}}$. The inequality (b) above holds because by definitions of $\bar{k}(\ell)=\min\Bp{k\in[K]\mid \Delta_k\geq\frac{2^{-i}}{2}}$ and $\mc{A}_{\ell-2}=\Bp{x\in\X\mid \Delta_x\leq 2\cdot 2^{-(\ell-2)}}$, we have $16\Delta_{\bar{k}(\ell)}\geq\Delta_x$ for any $x\in\mc{A}_{\ell-2}.$

    Therefore, by plugging the bound of both terms back, we have
    \fontsize{9}{9}
    $$\overline{H}_{\textsf{P1-RAGE}}(\theta)\leq \frac{1024mi_0}{\Delta_1}\inf_{\lambda\in\triangle_{\X}}\max_{x\neq x_1}\frac{\Norm{x-x_1}^2_{A(\lambda)^{-1}}}{\Delta_x} + \frac{16m\sqrt{d}}{3\Delta_1}\inf_{\lambda\in\triangle_{\X}}\max_{x\neq x_1}\Norm{x-x_1}_{A(\lambda)^{-1}}+\frac{1}{3\Delta_1}.$$
    \normalsize
\end{proof}

In the following corollary, we show that the above simplified complexity is in a same order (up to logarithmic factors) of $H_{\mathrm{BOB}}$ proposed in \citet{abbasi2018best}.
\begin{corollary}
    \label{coro:bobw_linear_to_mab}
    In multi-armed bandits, meaning $d=K$ and $\X=\Bp{\ve{e}_1, \dots, \ve{e}_K}$, for $H_{\textsf{P1-RAGE}}(\theta)$ (defined in equation \eqref{equ:H_bobw}), if $m=i_0$, we then have
    $$H_{\textsf{P1-RAGE}}(\theta)\leq \frac{2i_0\Sp{i_0\log(2K)+1}}{\Delta_{(1)}}\max_{k\in[K]}\frac{k}{\Delta_{(k)}}=2i_0\Sp{i_0\log(2K)+1}H_{\mathrm{BOB}}(\theta).$$
\end{corollary}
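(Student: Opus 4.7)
The plan is to specialize the general bound on $H_{\textsf{P1-RAGE}}(\theta)$ from equation \eqref{equ:H_bobw} to the multi-armed bandits setting where $\X = \Bp{\ve{e}_1, \dots, \ve{e}_K}$ and $d = K$. In this case, for any $\lambda \in \triangle_{\X}$, the design matrix is $A(\lambda) = \mathrm{diag}(\lambda)$, so for any indices $k \neq k'$ we have $\Norm{\ve{e}_k - \ve{e}_{k'}}^2_{A(\lambda)^{-1}} = 1/\lambda_k + 1/\lambda_{k'}$. This reduces both infimums appearing in $H_{\textsf{P1-RAGE}}(\theta)$ to explicit optimization problems over the simplex, where concrete allocation choices lead to tractable bounds.

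First I would bound the dominant (first) term in equation \eqref{equ:H_bobw} by choosing the harmonic allocation $\lambda_{(k)} = (1/k)/H_K$, where $H_K = \sum_{j=1}^K 1/j$. This choice gives $1/\lambda_{(k)} = k H_K$, so the ratio becomes $(1+k)H_K/\Delta_{(k)} \leq 2 H_K \cdot k/\Delta_{(k)}$, and taking the maximum over $k \neq (1)$ yields $2 H_K \max_k k/\Delta_{(k)}$. This exactly produces the $H_{\mathrm{BOB}}(\theta)$ structure up to the harmonic-sum factor.

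For the second term I would use the uniform allocation $\lambda_{(k)} = 1/K$, which gives $\Norm{\ve{e}_{(1)} - \ve{e}_{(k)}}_{A(\lambda)^{-1}} = \sqrt{2K}$ uniformly over $k$. Combined with the $\sqrt{d} = \sqrt{K}$ factor, this term contributes $\sqrt{2}\,m K/\Delta_{(1)}$, which I would convert into a multiple of $H_{\mathrm{BOB}}(\theta)$ using the elementary observations $K/\Delta_{(K)} \leq \max_k k/\Delta_{(k)}$ and $\Delta_{(K)} \leq 2$ (the latter follows from the normalization in Section \ref{sec:linear_formulation}). Thus the second term adds only an $O(m) \cdot H_{\mathrm{BOB}}(\theta)$ correction.

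Finally, substituting $m = i_0$ and combining the two bounds, I would apply the harmonic-sum estimate $H_K \leq 1 + \log K \leq \log(2K) + 1$ and algebraically collect terms to reach $2 i_0 (i_0 \log(2K) + 1) H_{\mathrm{BOB}}(\theta)$. The main obstacle is tracking absolute constants so that the additive lower-order contributions (from the second term and from the constant piece of $H_K$) fit inside the advertised ``$+1$'' slack. Doing this cleanly requires either a slightly tighter estimate of $H_K$ (for instance, splitting the analysis into the small-$K$ and large-$K$ regimes) or a marginally more careful joint choice of allocation for the two terms; the overall strategy, however, is unchanged.
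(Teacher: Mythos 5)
Your proposal is correct in strategy and follows essentially the same decomposition as the paper: bound the two infima in $H_{\textsf{P1-RAGE}}(\theta)$ separately using the diagonal structure $A(\lambda)=\mathrm{diag}(\lambda)$, and for the second term both you and the paper use the uniform design to get $\sqrt{2K}$ and then convert $\sqrt{2}K/\Delta_{(1)}$ into $\frac{2}{\Delta_{(1)}}\max_k k/\Delta_{(k)}$ via $K/\Delta_{(K)}\leq\max_k k/\Delta_{(k)}$. The only genuine difference is the first term: the paper cites two known results, namely $\inf_{\lambda}\max_{x\neq x_{(1)}}\Norm{x-x_{(1)}}^2_{A(\lambda)^{-1}}/\Delta_x\leq 2\sum_k 1/\Delta_k$ from \citet{soare2014best} and then $\sum_k 1/\Delta_{(k)}\leq\log(2K)\max_k k/\Delta_{(k)}$ from \citet{audibert2010best}, whereas you construct the explicit harmonic allocation $\lambda_{(k)}\propto 1/k$ and obtain $2H_K\max_k k/\Delta_{(k)}$ directly. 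Your route is more self-contained, but as you note it lands on $2H_K$ rather than $2\log(2K)$, and $H_K>\log(2K)$ for $K\leq 4$; the Audibert--Bubeck inequality avoids this by exploiting $\Delta_{(1)}=\Delta_{(2)}$, which effectively replaces $H_K$ by $H_K-\tfrac12\leq\log(2K)$ (you could recover the same saving by observing that the $k=2$ term of your maximum carries coefficient $\tfrac{3}{2}H_K$ and the $k\geq 3$ terms carry $\tfrac{1+k}{k}H_K\leq\tfrac{4}{3}H_K$). One further constant-level caveat applies to both arguments: converting $\sqrt{2}K$ into $2K/\Delta_{(K)}$ requires $\Delta_{(K)}\leq\sqrt{2}$, while the normalization in Section \ref{sec:linear_formulation} only guarantees $\Delta_{(K)}\leq 2$; your version honestly yields $2\sqrt{2}$ there, and the paper's own step silently has the same issue, so this does not distinguish the two proofs.
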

\begin{proof}
    When in multi-armed bandits, for the first term in $H_{\textsf{P1-RAGE}}(\theta)$, we have
    $$\inf_{\lambda\in\triangle_{\X}}\max_{x\neq x_{(1)}}\frac{\Norm{x-x_{(1)}}^2_{A(\lambda)^{-1}}}{\Delta_x}\leq 2\sum_{k=1}^{K}\frac{1}{\Delta_k}\leq 2\log(2K)\max_{k\in[K]}\frac{k}{\Delta_{(k)}},$$
    where the first inequality above comes from \citet{soare2014best} and the second inequality comes from \citet{audibert2010best}. For the second term in $H_{\textsf{P1-RAGE}}(\theta)$, we have
    $$\inf_{\lambda\in\triangle_{\X}}\max_{x\neq x_{(1)}}\Norm{x-x_{(1)}}_{A(\lambda)^-1{}}=\inf_{\lambda\in\triangle_{\X}}\max_{k\neq (1)}\sqrt{\frac{1}{\lambda_{(1)}}+\frac{1}{\lambda_{k}}}=\sqrt{2K}, $$
    which then gives us $\frac{\sqrt{K}\cdot\sqrt{2K}}{\Delta_{(1)}}\leq\frac{2K}{\Delta_{(1)}\Delta_{(K)}}\leq\frac{2}{\Delta_{(1)}}\max_{k\in[K]}\frac{k}{\Delta_{(k)}}$. 

    Finally, by plugging these inequalities back into $H_{\textsf{P1-RAGE}}(\theta)$ (defined in equation \eqref{equ:H_bobw}), we have
    \begin{align*}
        H_{\textsf{P1-RAGE}}(\theta)=& \frac{mi_0}{\Delta_{(1)}}\inf_{\lambda\in\triangle_{\X}}\max_{x\neq x_{(1)}}\frac{\Norm{x-x_{(1)}}^2_{A(\lambda)^{-1}}}{\Delta_x} + \frac{m\sqrt{d}}{\Delta_{(1)}}\inf_{\lambda\in\triangle_{\X}}\max_{x\neq x_{(1)}}\Norm{x-x_{(1)}}_{A(\lambda)^{-1}}\\
        \leq & \frac{2i_0^2\log(2K)}{\Delta_{(1)}}\max_{k\in[K]}\frac{k}{\Delta_{(k)}} + \frac{2i_0}{\Delta_{(1)}}\max_{k\in[K]}\frac{k}{\Delta_{(k)}}\\
        = & \frac{2i_0\Sp{i_0\log(2K)+1}}{\Delta_{(1)}}\max_{k\in[K]}\frac{k}{\Delta_{(k)}}.
    \end{align*}
\end{proof}

\subsubsection{Approximate BAI of Algorithm \ref{algo:p1_rage}}
\begin{corollary}
    \label{coro:epsilon_bai}
    Fix arm set $\X\subset\R^d$ with $\abs{\X}=K$ and budget $T$. For a stationary environment with unknown parameter $\theta$, if $m\geq i_0(\epsilon)=\ceil{\log_2\Sp{1/\epsilon}}+1$ for some $\epsilon\geq \Delta_{1}$, then there exists absolute constant $c>0$ such that the error probability of \textsf{P1-RAGE} satisfies
    $$\P_{\theta}\Sp{J_T\notin\mc{A}(\epsilon) }\leq 2i_0(\epsilon) KT\exp\Sp{-\frac{cT}{H_{\textsf{P1-RAGE}}(\theta, \epsilon)}},$$
    where $\mc{A}(\epsilon)=\Bp{x\in\X\mid \Delta_x\leq\epsilon}$ and $H_{\textsf{P1-RAGE}}(\theta, \epsilon)$ is defined as replacing $i_0$ by $i_0(\epsilon)$ in $H_{\textsf{P1-RAGE}}(\theta)$ (defined in Eq. \eqref{equ:H_p1rage}).
\end{corollary}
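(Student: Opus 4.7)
The plan is to adapt the proof of Theorem \ref{theo:bobw_error_prob_raw} by simply replacing $i_0=\lceil\log_2(1/\Delta_{(1)})\rceil+1$ everywhere by $i_0(\epsilon)=\lceil\log_2(1/\epsilon)\rceil+1$. Since $\epsilon\geq\Delta_{(1)}$ gives $i_0(\epsilon)\leq i_0$, the hypothesis $m\geq i_0(\epsilon)$ is still strong enough for the subroutine guarantee of Lemma \ref{lmm:subroutine_guarantees} to apply at every index touched by the argument, so the Bernstein concentration of $\hdeltat_k$ established in the ``single-term error probability'' step carries over without modification.

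The only step that genuinely needs reworking is the correctness claim. I would redefine the events $\xi_i$ exactly as before, but only for $i=1,\ldots,i_0(\epsilon)$, using cutoffs $0=n_0<n_1\leq\cdots\leq n_{i_0(\epsilon)}=T$, and then show that on $\bigcap_{i=1}^{i_0(\epsilon)}\xi_i$ the algorithm outputs an arm in $\mc{A}(\epsilon)$. To see this, suppose for contradiction that $\Delta_{J_T}>\epsilon$. Since $J_T=\argmax_{x\in\X} x^\top\htheta_T$, we have $\widehat{\Delta}_{J_T}^{(T)}\leq 0$ at $t=T$, so $\Delta_{J_T}-\widehat{\Delta}_{J_T}^{(T)}\geq\Delta_{J_T}>\epsilon$. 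Pick the integer $i'$ with $2^{-i'}<\Delta_{J_T}\leq 2\cdot 2^{-i'}$; the upper inequality gives $2^{-i'}\geq\Delta_{J_T}/2>\epsilon/2$, which forces $i'\leq i_0(\epsilon)-1$. But then $J_T\in\mc{A}_{i'}$, so $\xi_{i'}$ forces $|\Delta_{J_T}-\widehat{\Delta}_{J_T}^{(T)}|<2^{-i'}/2$, contradicting $\Delta_{J_T}>2^{-i'}$.

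With correctness in hand, I would bound $\P(J_T\notin\mc{A}(\epsilon))\leq\sum_{i=1}^{i_0(\epsilon)}\P(\xi_i^c\mid\bigcap_{j<i}\xi_j)$ by the same union-plus-Bernstein-plus-\textsf{RAGE-Elimination} computation as in Theorem \ref{theo:bobw_error_prob_raw}, producing for each $i\leq i_0(\epsilon)$ a single-term exponential in $n_i$ whose denominator involves $\sum_{i'=1}^{\bar{i}(k)}(n_{i'}-n_{i'-1})f(\mc{A}_{i'-2})$ and $n_{\bar i(k)}2^{-\bar i(k)}$. Reparameterizing $a_i=n_i/T$ and minimizing over $\ve{a}\in\mc{D}$ (now a simplex of length $i_0(\epsilon)+1$) yields the stated bound, with $H_{\textsf{P1-RAGE}}(\theta,\epsilon)$ obtained from Eq.~\eqref{equ:H_p1rage} by the uniform substitution $i_0\mapsto i_0(\epsilon)$. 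The only real subtlety is the index-accounting in the correctness step above; once that is resolved, no new analytic ingredient is needed beyond what already appears in the proof of Theorem \ref{theo:bobw_error_prob_raw}.
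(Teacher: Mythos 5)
Your proposal is correct and follows exactly the route the paper takes: the paper's own proof of Corollary~\ref{coro:epsilon_bai} is the one-line remark that one repeats the argument of Theorem~\ref{theo:bobw_error_prob_raw} with $i_0$ replaced by $i_0(\epsilon)$, and your write-up simply fills in the details of that substitution, in particular the modified correctness step showing that on $\bigcap_{i=1}^{i_0(\epsilon)}\xi_i$ the returned arm lies in $\mc{A}(\epsilon)$. Your index bookkeeping there ($2^{-i'}>\epsilon/2$ forcing $i'\leq i_0(\epsilon)-1$) is the right accounting and is consistent with the paper's definitions.
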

\begin{proof}
    The proof is the same as Theorem \ref{theo:bobw_upper_bound} through simply replacing $i_0$ by $i_0(\epsilon)$.
\end{proof}

\subsection{Non-stationary Environments}
In this section, we prove the error probability of Algorithm \ref{algo:p1_rage} in general non-stationary environments.
\begin{theorem}
    Fix time horizon $T$, arm set $\X\subset\R^d$ with $\abs{\X}=K$ and arbitrary unknown parameters $\Bp{\theta_t}_{t=1}^T$. If we run Algorithm \ref{algo:p1_rage} in this non-stationary environment and obtain $x_{J_T}$, then it holds that
    $$\P_{\otheta_T}\Sp{J_T\neq (1)}\leq K\exp\Sp{-\frac{3T\Delta_{(1)}^2}{64d}}.$$
\end{theorem}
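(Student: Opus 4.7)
The plan is to follow the same high-level structure as the proof of Theorem \ref{theo:adv_upper_bound} for \textsf{G-BAI}, exploiting the fact that in Algorithm \ref{algo:p1_rage} the sampling distribution at each round is the mixture $\lambda_t = (\bar\lambda_t + \lambda^*)/2$, so at least half of the mass is always placed on the G-optimal design. The main adaptation relative to the \textsf{G-BAI} argument is that $\lambda_t$ is now \emph{adaptive}, so the per-step summands form a martingale difference sequence rather than an i.i.d.\ one. This is handled by using Bernstein's inequality for martingale differences (Freedman's inequality), exactly as is already done in the stationary analysis (see the chain of inequalities around Eq.~\eqref{equ:epsilon_val}).

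First, I would reuse the decomposition from the \textsf{G-BAI} proof:
\begin{align*}
    \P_{\otheta_T}(J_T \neq (1)) &\leq \P\!\left(x_{(1)}^\top \htheta_T - x_{(1)}^\top \otheta_T \leq -\tfrac{\Delta_{(1)}}{2}\right) + \sum_{k=2}^{K}\P\!\left(x_{(k)}^\top \htheta_T - x_{(k)}^\top \otheta_T \geq \tfrac{\Delta_{(k)}}{2}\right).
\end{align*}
For any fixed arm $x\in\X$, we have $T(x^\top \htheta_T - x^\top\otheta_T) = \sum_{t=1}^T x^\top(A(\lambda_t)^{-1} x_t r_t - \theta_t)$, and since $\lambda_t$ is measurable with respect to the history $\mathcal{H}_{t-1}$, conditional unbiasedness of the IPS estimator makes each summand a martingale difference.

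The key structural ingredient is the mixture bound: because $A(\lambda_t) = \tfrac{1}{2}A(\bar\lambda_t) + \tfrac{1}{2}A(\lambda^*) \succeq \tfrac{1}{2}A(\lambda^*)$, we get $A(\lambda_t)^{-1} \preceq 2 A(\lambda^*)^{-1}$, so Kiefer–Wolfowitz yields $\|x\|^2_{A(\lambda_t)^{-1}} \leq 2d$ uniformly in $t$ and $x\in\X$. This delivers both required controls for Freedman's inequality: the conditional variance is bounded by
\[
    \E\!\left[\left(x^\top(A(\lambda_t)^{-1} x_t r_t - \theta_t)\right)^2 \,\big|\, \mathcal{H}_{t-1}\right] \leq \E\!\left[(x^\top A(\lambda_t)^{-1} x_t)^2\,\big|\, \mathcal{H}_{t-1}\right] \cdot \sup_t r_t^2 \leq O(d),
\]
and the almost-sure magnitude is bounded by $|x^\top A(\lambda_t)^{-1} x_t| + |x^\top\theta_t| \leq \|x\|_{A(\lambda_t)^{-1}}\|x_t\|_{A(\lambda_t)^{-1}} + 1 \leq 2d + 1 \leq 3d$, exactly paralleling the \textsf{G-BAI} calculation but with an extra factor of $2$ from the mixture.

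Plugging these bounds into Freedman's inequality with threshold $T\Delta_{(k)}/2$ and using $\Delta_{(k)} \leq 2$ to absorb the linear term into the quadratic one gives each concentration probability an upper bound of the form $\exp(-cT\Delta_{(k)}^2/d)$, and the explicit constant $3/64$ follows by tracking the constants carefully (twice the variance and twice the magnitude compared to \textsf{G-BAI}'s $1/12$). A union bound over $k\in[K]$ together with $\Delta_{(k)} \geq \Delta_{(1)}$ then yields the claimed bound. I do not anticipate a real obstacle: this is essentially the \textsf{G-BAI} proof, with the only substantive change being the replacement of classical Bernstein by the martingale version and the use of the mixture to reduce every variance/magnitude quantity to Kiefer–Wolfowitz on $\lambda^*$.
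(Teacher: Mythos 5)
Your proposal is correct and follows essentially the same route as the paper's own proof: the same union-bound decomposition over arms, Freedman's inequality for the martingale differences $x^\top(A(\lambda_t)^{-1}x_t r_t-\theta_t)$, and the mixture property $A(\lambda_t)^{-1}\preceq 2A(\lambda^*)^{-1}$ combined with Kiefer--Wolfowitz to bound the conditional variance by $O(d)$ and the magnitude by $O(d)$. The only differences are at the level of constant bookkeeping (e.g., the factor $|r_t|\leq 2$ in the magnitude bound), which you correctly flag as routine.
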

\begin{proof}
    The proof will basically resemble the one for Theorem \ref{theo:adv_upper_bound}. In particular, by the same reasoning to obtain equation \ref{equ:g_bernstein}, we have
    $$\P\Sp{J_T\neq (1)}\leq \P\Sp{x_{(1)}^\top\htheta_T-x_{(1)}^\top\otheta_T\leq -\frac{\Delta_{(1)}}{2}} + \sum_{k=2}^{K}\P\Sp{x_{(k)}^\top\htheta_T-x_{(k)}^\top\otheta_T\geq\frac{\Delta_{(k)}}{2}},$$
    $$\text{where }\P\Sp{x_{(1)}^\top\htheta_T-x_{(1)}^\top\otheta_T\leq -\frac{\Delta_{(1)}}{2}}=\P\Sp{\sum_{t=1}^{T}x_{(1)}^\top\Sp{A(\lambda_t)^{-1}x_tr_t-\theta_t}\leq-\frac{T\Delta_{(1)}}{2}}.$$
    Since $\lambda_t=\frac{\bar{\lambda}_t+\lambda^*}{2}$ and $\lambda\mapsto\Norm{x}^2_{A(\lambda)^{-1}}$ is convex in $\lambda$, to use the Berstein's inequality for martingale differences \citep{freedman1975tail}, we have
    $$\abs{x_{(1)}^\top\Sp{A(\lambda_t)^{-1}x_tr_t-\theta_t}}\leq 2\Norm{x_{(1)}}_{A(\lambda^*)^{-1}}\Norm{x_t}_{A(\lambda^*)^{-1}}+2\leq 2d+2\leq 4d,$$
    $$\E\Mp{\Sp{x_{(1)}^\top\Sp{A(\lambda_t)^{-1}x_tr_t-\theta_t}}^2\mid \lambda_t}=\Norm{x_{(1)}}^2_{A(\lambda_t)^{-1}} \leq  2\Norm{x_{(1)}}^2_{A(\lambda^*)^{-1}}\leq 2d.$$
    Therefore, we have
    $$\P\Sp{x_{(1)}^\top\htheta_T-x_{(1)}^\top\otheta_T\leq -\frac{\Delta_{(1)}}{2}}\leq\exp\Sp{-\frac{T\Delta_{(1)}^2/8}{2d+2d\Delta_{(1)}/3}}\leq \exp\Sp{-\frac{3T\Delta_{(1)}^2}{64d}}.$$
    By applying the same inequality to other terms, we have
    $$\P\Sp{J_T\neq (1)}\leq K\exp\Sp{-\frac{3T\Delta_{(1)}^2}{64d}}.$$
\end{proof}
\section{IMPLEMENTATION DETAILS AND ADDITIONAL EXPERIMENTS}
\label{sec:experiment_details}

In this section, we provide more implementation details and additional experiment results. Experiments are executed through Python 3.10 and paralleled by a Mac M1 Pro chip with 6 cores.

First, we notice that an algorithm for stationary environments usually determines a batch of arms to pull at once during each epoch, while in non-stationary environment, the order of pulling these arms will affect the rewards it will receive. Therefore, when applying stationary algorithms (\textsf{Peace} and \textsf{OD-LinBAI}) into a non-stationary environment, we use a random permutation to determine the order of pulling for each batch of arms. 

When implementing \textsf{P1-RAGE}, to be computationally efficient, we update $\lambda_t$ in the same frequency as \textsf{P1-Peace}, which is summarized in Algorithm \ref{algo:p1_peace}. We take $m=15$ for \textsf{P1-RAGE}, which, based on Theorem \ref{theo:bobw_upper_bound}, is valid as long as $\Delta_{(1)}\geq 2^{-13}\approx 1.22\times 10^{-4}$. Furthermore, when implementing \textsf{Peace}, for simplicity, we use $\inf_{\lambda\in\triangle_{\X}}\rho(\mc{Z}, \lambda)$, defined in equation \eqref{equ:rho_z}, to replace all $\gamma(\mc{Z})$ used in \citet{katz2020empirical}. Since the paper of \textsf{OD-LinBAI} does not provide code, we implement it based on the pseudocode in \citet{yang2022minimax}. Finally, we use Frank-Wolfe algorithm with stepsize $\frac{1}{2(i+2)}$ in $i$-th iteration to solve all optimization problems in a form of $\inf_{\lambda\in\triangle_{\X}}\max_{y\in\mc{Y}}\Norm{y}^2_{A(\lambda)^{-1}}$.

As for code snippets reference, we use part of the code from \citet{katz2020empirical} to implement the rounding procedure used in \textsf{Peace}\footnote{No license information.} and part of the code from \citet{fiez2019sequential} to generate the base stationary instance for the multivariate testing example.\footnote{Under MIT License.} We also use code from \citet{xu2018fully} to preprocess the Yahoo! Webscope dataset.\footnote{No license information.}

\subsection{Additional Experiments}
\label{sec:additional_experiments}

Here, we provide experiment results on some additional examples to corroborate our theoretical findings.

\paragraph{Malicious non-stationary example} Because of the nature of arm elimination, algorithms designed for stationary environment can fail easily in some malicious non-stationary environments. Here, we pick the same $\X$ as \citet{soare2014best}'s stationary benchmark example and set $\omega=0.5$. Then, we take
$$\theta_t=\begin{cases}
    \matenv{0 & 1 & 1 & \dots & 1}^\top & \text{for }t=1, \dots, \frac{T}{3},\\
    \matenv{2 & 0 & 0 & \dots & 0}^\top & \text{for }t=\frac{T}{3}+1, \dots, T.
\end{cases}$$
We can see that the overall best arm is still $x_{(1)}=\ve{e}_1$. However, because of the $\theta_t$ in the first $1/3$ rounds, algorithms like \textsf{Peace} and \textsf{OD-LinBAI} will eliminate $\ve{e}_1$ in its initial phase; on the other hand, our algorithms will be robust to this non-stationarity. Here, we take $T=10^4$ and the results are shown in right plot of Figure \ref{fig:error_malicious}.

\begin{figure}[ht]
    \centering
    \includegraphics[width=0.5\linewidth]{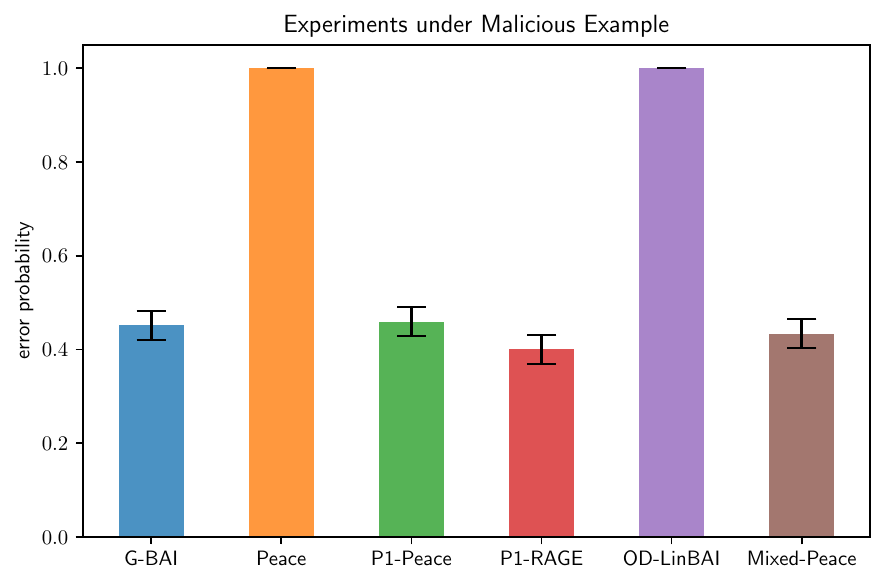}
    \caption{The error probabilities are estimated through 1000 repeated trials and the error bars represent $95\%$ confidence intervals.}
    \label{fig:error_malicious}
\end{figure}



\paragraph{Stationary multivariate testing example} We also test the performance of these algorithms in multivariate testing example when there is no non-stationarity, i.e. $\theta_t=\theta^*$ for all $t$. Here, we also take $T=10^4$ and the results are shown in Figure \ref{fig:error_multi}. We can see that our robust algorithm \textsf{P1-RAGE} again performs better than \textsf{G-BAI} and comparably with \textsf{Peace}.

\begin{figure}[ht]
    \centering
    \includegraphics[width=0.5\linewidth]{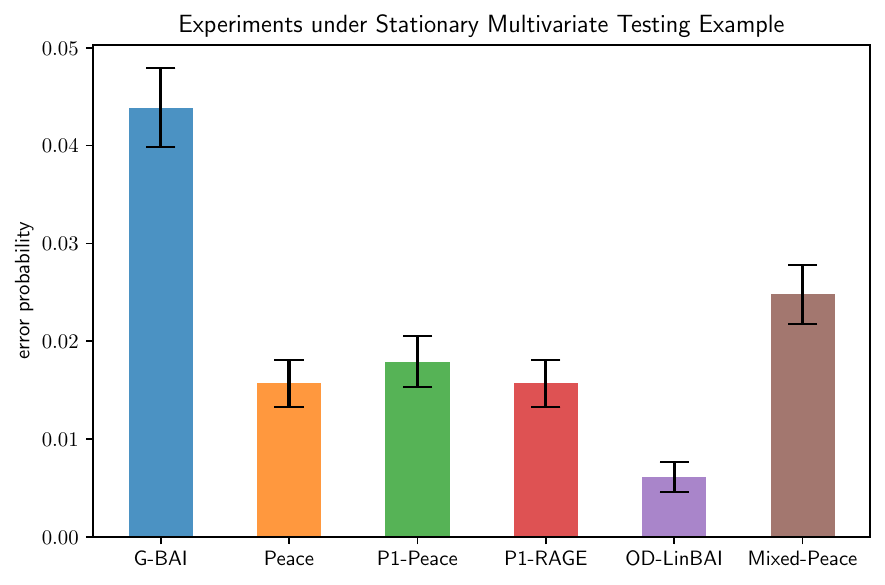}
    \caption{The error probabilities are estimated through $10^4$ repeated trials and the error bars represent $95\%$ confidence intervals.}
    \label{fig:error_multi}
\end{figure}

\paragraph{Non-stationary benchmark example} In this example, we add non-stationarity to \citet{soare2014best}'s stationary benchmark example in a more structured instead of malicious way. In particular, we keep the arm set $\X$ the same, take $\omega=0.5$ and set
$$\theta_t=\matenv{0.3 & 0 & 0 & \dots & -s\sin\Sp{\frac{2\pi t}{L}} + 0.5}^\top, $$
where $s$ is the oscillation scale and $L$ is the oscillation period, In the first series of instances, we fix $L=200$ and take values $m\in\Bp{0, 1, \dots, 9}$; in the second series of instances, we fix $m=1$ and take values $L\in\Bp{300, 600, \dots, 3000}$. All non-stationary instances have the same optimal arm as their stationary counterparts and we take $T=10^4$ for all of these instances. The results are shown in Figure \ref{fig:adv_soare}, from which we can see similar phenomenon as in Figure \ref{fig:adv_multi}. In particular, algorithms designed for stationary environments, \textsf{Peace} and \textsf{OD-LinBAI}, are very unstable in face of non-stationarity. Meanwhile, among the other four relatively robust algorithms, our algorithms \textsf{P1-RAGE} and \textsf{P1-Peace} consistently outperform the other two.

\begin{figure}[ht]
    \centering
    \includegraphics[width=\linewidth]{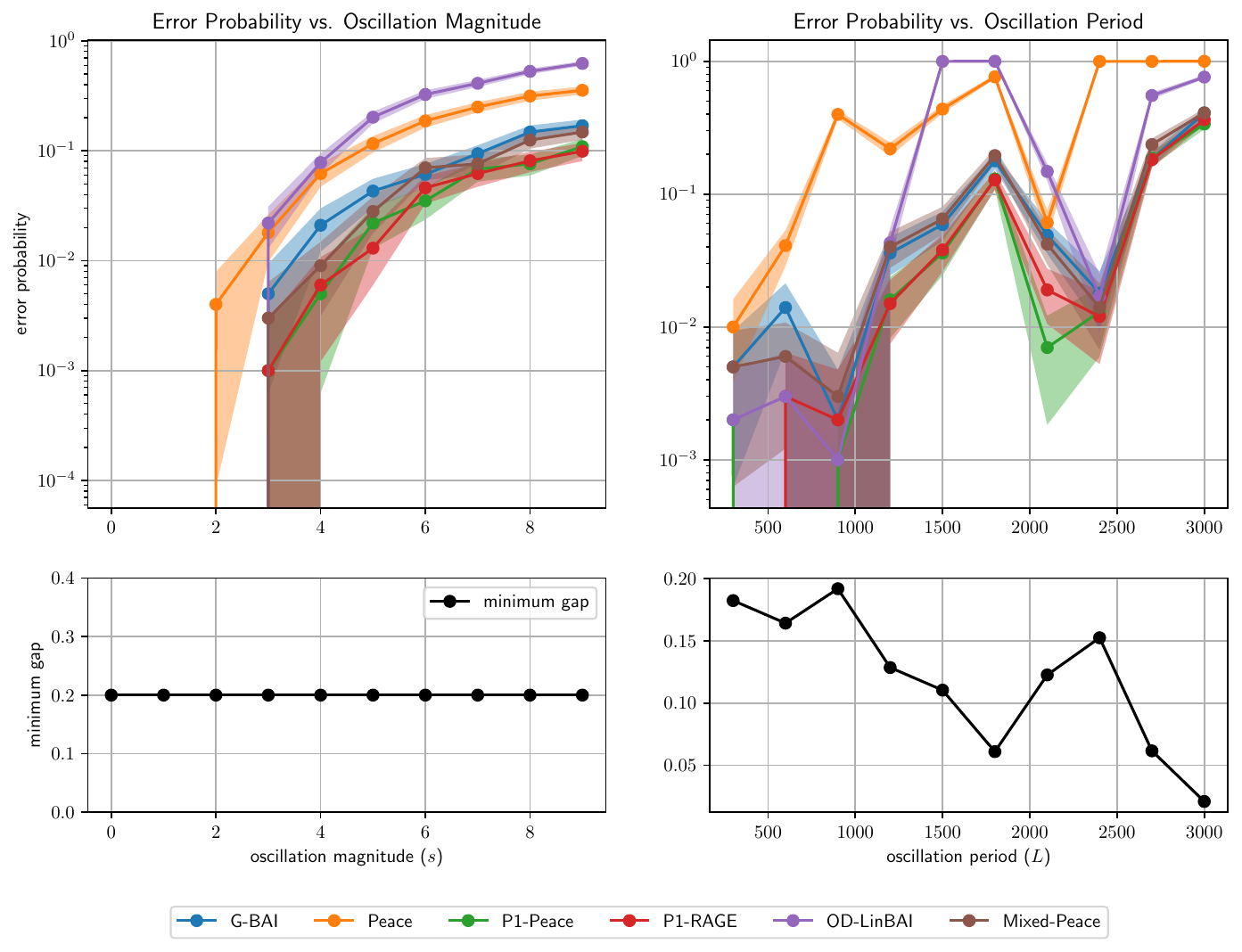}
    \caption{The vertical axis (error probability) is in log scale. The shaded area represents the $95\%$ confidence interval. Each error probability is estimated through 1000 repeated trials. The bottom two plots give the minimum gap $\Delta_{(1)}$ of each instance that algorithms run over}
    \label{fig:adv_soare}
\end{figure}

\end{document}